\crefname{lemma}{Lemma}{Lemmas}
\crefname{remark}{Remark}{Remark}
\crefname{theorem}{Theorem}{Theorem}
\crefname{proposition}{Proposition}{Proposition}
\newtheorem{remark}{Remark}[section]
\newtheorem{conjecture}{Conjecture}[section]
\theoremstyle{definition}
\newtheorem{definition}{Definition}[section]
\title{AutoInit: Automatic Initialization via Jacobian Tuning}
\newcommand*\samethanks[1][\value{footnote}]{\footnotemark[#1]}
\author{%
    Tianyu He \thanks{The first two authors contributed equally to this work, and Tianyu He is the corresponding author.}\\
    Brown University \\
    \texttt{tianyu\_he@brown.edu}
    \And
    Darshil Doshi \samethanks\\
    Brown University \\
    \texttt{darshil\_doshi@brown.edu}
    \And
    Andrey Gromov \\
    Brown University\\
    \texttt{andrey\_gromov@brown.edu}
}
\begin{document}

\maketitle

\begin{abstract}
    Good initialization is essential for training Deep Neural Networks (DNNs). Oftentimes such initialization is found through a trial and error approach, which has to be applied anew every time an architecture is substantially modified, or inherited from smaller size networks leading to sub-optimal initialization. In this work we introduce a new and cheap algorithm, that allows one to find a good initialization automatically, for general feed-forward DNNs. The algorithm utilizes the Jacobian between adjacent network blocks to tune the network hyperparameters to criticality. We solve the dynamics of the algorithm for fully connected networks with ReLU and derive conditions for its convergence. We then extend the discussion to more general architectures with BatchNorm and residual connections. Finally, we apply our method to ResMLP and VGG architectures, where the automatic one-shot initialization found by our method shows good performance on vision tasks.  
\end{abstract}

\section{Introduction}
Initializing Deep Neural Networks (DNNs) correctly is crucial for trainability and convergence. In the recent years, there has been remarkable progress in tackling the problem of exploding and vanishing gradients. 
One line of work utilizes the convergence of DNNs to Gaussian Processes in the limit of infinite width \citep{neal1996priors, lee2018deep, matthews2018gaussian, novak2018bayesian, garriga2018deep, hron2020infinite, yang2019tensor}. The infinite width analysis is then used to determine critical initialization for the hyperparameters of the network \citep{he2015delving, poole2016exponential, schoenholz2016deep, lee2018deep, roberts2021principles, doshi2021critical}. 
It has further been shown that dynamical isometry can improve the performance of DNNs \citep{pennington2018spectral, xiao2018dynamical}. Exploding and vanishing gradients can also be regulated with special activation functions such as SELU \citep{klambauer2017self-normalizing} and GPN \citep{lu2020bidirectionally}. Deep Kernel shaping \citep{martens2021deep, zhang2022deep} improves trainability of deep networks by systematically controlling $Q$ and $C$ maps.
Normalization layers such as LayerNorm \citep{ba2016layer}, BatchNorm \citep{ioffe2015batch} and \citep{wu2018group} facilitate training of DNNs by significantly enhancing the critical regime \citep{doshi2021critical}. There have also been algorithmic attempts at regulating the forward pass, such as LSUV \citep{mishkin2015lsuv}.

Another line of work sets the networks with residual connections to criticality by suppressing the contribution from the residual branches at initialization.
In Highway Networks \citep{srivastava2015training}, this is achieved by initializing the network to have a small ``transform gate''. \citet{goyal2017accurate} achieve this in ResNets, by initializing the scaling coefficient for the residual block's last BatchNorm at 0. In Fixup \citep{zhang2019fixup} and T-Fixup \citep{huang2020improving}, careful weight-initialization schemes ensure suppression of residual branches in deep networks. Techniques such as SkipInit \citep{de2020batch}, LayerScale \citep{touvron2021cait} and ReZero \citep{bachlechner2021rezero} multiply the residual branches by a trainable parameter, initialized to a small value or to 0.

Despite this progress, the aforementioned techniques are limited by either the availability of analytical solutions, specific use of normalization layers, or the use of residual connections. One needs to manually decide on the techniques to be employed on a case-by-case basis. In this work, we propose a simple algorithm, which we term $\texttt{AutoInit}$, that automatically initializes a DNN to criticality. Notably, the algorithm can be applied to any feedforward DNN, irrespective of the architectural details, large width assumption or existence of analytic treatment. We expect that $\texttt{AutoInit}$ will be an essential tool in architecture search tasks because it will always ensure that a never-before-seen architecture is initialized well.

\subsection{Criticality in Deep Neural Networks}
In the following, we employ the definition of criticality using \emph{Partial Jacobian} \citep{doshi2021critical}.
Consider a DNN made up of a sequence of blocks. Each block consists of Fully Connected layers, Lipschitz activation functions, Convolutional layers, Residual Connections, LayerNorm\citep{ba2016layer}, BatchNorm\citep{ioffe2015batch}, AffineNorm\citep{touvron2021resmlp}, LayerScale\citep{touvron2021cait}, or any combination of thereof. We consider a batched input to the network, where each input tensor $x \in \mathbb{R}^{n^0_1} \otimes \mathbb{R}^{n^0_2} \otimes \cdots$ is taken from the batch $B$ of size $\lvert B \rvert$. The output tensor of the $l^{th}$ block is denoted by $h^l (x) \in \mathbb{R}^{n^l_1} \otimes \mathbb{R}^{n^l_2} \otimes \cdots$. $h^{l+1}(x)$ depends on $h^l(x)$ through a layer-dependent function $\mathcal{F}^{l}$, denoting the operations of the aforementioned layers. This function, in turn, depends on the parameters of the various layers within the block, denoted collectively by $\theta^{l+1}$. The explicit layer dependence of the function $\mathcal{F}^{l}$ highlights that we do not require the network to have self-repeating layers (blocks). We note that $h^{l+1} (x)$ can, in general, depend on $h^{l} (x')$ for all $x'$ in the batch $B$; which will indeed be the case when we employ BatchNorm. The recurrence relation for such a network can be written as
\begin{align}\label{eq:DNNrecursion}
    h^{l+1} (x) = \mathcal{F}^{l+1}_{\theta^{l+1}} \left( \{h^l(x') \;|\; \forall x' \in B \} \right) \,, 
\end{align}
where we have suppressed all the indices for clarity. Each parameter matrix $\theta^{l+1}$ is sampled from a zero-mean distribution. We will assume that some $2+\delta$ moments of $|\theta^{l+1}|$ are finite such that the Central Limit Theorem holds. Then the variances of $\theta^{l+1}$ can be viewed as hyperparameters and will be denoted by $\sigma^{l+1}_{\theta}$ for each $\theta^{l+1}$.

We define the $\texttt{Flatten}$ operation, which reshapes the output $h^l(x)$ by merging all its dimensions.
\begin{align}
    \bar h^l(x) = \texttt{Flatten}\left( h^l(x) \right) \sim \mathbb{R}^{N^l} \,,
\end{align}
where $N^l \equiv n^l_1 n^l_2 \cdots$.

\begin{definition}[Average Partial Jacobian Norm (APJN)]
\label{def:APJN}
    For a DNN given by \eqref{eq:DNNrecursion}, APJN is defined as 
    \begin{align}
        \mathcal J^{l_0, l} \equiv \mathbb E_{\theta} \left[\frac{1}{|B| N_l} \sum_{j=1}^{N_{l}} \sum_{i=1}^{N_{l_0}} \sum_{x, x' \in B} \frac{\partial \bar{h}^{l}_j(x')}{\partial \bar{h}^{l_0}_i(x)} \frac{\partial \bar{h}^{l}_j(x')}{\partial \bar{h}^{l_0}_i(x)} \right] \,,
    \end{align}
    where $\mathbb E_\theta[\cdot]$ denotes the average over parameter initializations.
\end{definition}

\begin{remark}
    For DNNs without BatchNorm and normalized inputs, definition of APJN for $|B|>1$ is equivalent to the one in $|B|=1$ case.
\end{remark}

We use APJN as the empirical diagnostic of criticality. 

\begin{definition}[Critical Initialization]
\label{def:critical}
    A DNN given by \eqref{eq:DNNrecursion}, consisting of $L+2$ blocks, including input and output layers, is critically initialized if all block-to-block APJN are equal to $1$, i.e.
    \begin{align}
        \mathcal J^{l,l+1} = 1 \,, \quad  \forall \quad 1 \leq l \leq L \,.
    \end{align}
\end{definition}

Critical initialization as defined by \Cref{def:critical} is essential, as it prevents the gradients from exploding or vanishing at $t=0$. One can readily see this by calculating the gradient for any flattened parameter matrix $\theta$ at initialization:
\begin{align}\label{eq:grad}
    \frac{1}{|\theta^l|}\|\nabla_{\theta^l} \mathcal L \|^2_2 =& \frac{1}{|\theta^l|} \left\|\sum_{\mathrm{all}} \frac{\partial \mathcal L}{\partial \bar{h}^{L+1}_i} \frac{\partial \bar{h}^{L+1}_i}{\partial \bar{h}^L_j}\cdots \frac{\partial \bar{h}^{l+1}_k}{\partial \bar{h}^l_m} \frac{\partial \bar{h}^l_m}{\partial \theta^{l}_{n}} \right\|^2_2 \nonumber \\
    \sim &\, O \left( \frac{1}{|\theta^l|} \left\|\frac{\partial \mathcal L}{\partial \bar{h}^{L+1}_i} \right\|^2_2 \cdot \mathcal J^{L, L+1} \cdots \mathcal J^{l, l+1} \cdot \left \|\frac{\partial \bar{h}^l_m}{\partial \theta^{l}_{n}} \right\|^2_F \right)\,,
\end{align}
where $\| \cdot \|_F$ denotes the Frobenius norm. In the second line, we utilized the factorization property of APJN
\begin{align}\label{eq:factor}
    \mathcal J^{l_0,l} = \prod_{l'=l_0}^{l-1} \mathcal J^{l', l'+1} \,,
\end{align}
which holds in the infinite width limit given there is no weight sharing across the blocks.

One may further require $\left\| \partial \mathcal L / \partial \bar{h}^{L+1}_i \right\|_2 \sim O(1)$. However, in practice we observe that this requirement is less important once the condition in \Cref{def:critical} is met.

\subsection{Automatic Critical Initialization}
For general architectures, analytically calculating APJN is often difficult or even impossible. This poses a challenge in determining the correct parameter initializations to ensure criticality; especially in networks without self-similar layers. Moreover, finite network width is known to have nontrivial corrections to the criticality condition \citep{roberts2021principles}. This calls for an algorithmic method to find critical initialization. 

To that end, we propose the \Cref{alg:j_general} that we called $\texttt{AutoInit}$ for critically initializing deep neural networks \emph{automatically}, without the need for analytic solutions of the signal propagation or of the meanfield approximation. The algorithm works for general feedforward DNNs, as defined in \eqref{eq:DNNrecursion}. Moreover, it naturally takes into account all finite width corrections to criticality because it works directly with an instance of a network. We do tacitly assume the existence of a critical initialization. If the network cannot be initialized critically, the algorithm will return a network that can propagate gradients well because the APJNs will be pushed as close to $1$ as possible.

The central idea behind the algorithm is to choose the hyperparameters for all layers such that the condition in \Cref{def:critical} is met. This is achieved by optimizing a few auxiliary scalar parameters $a^l_{\theta}$ of a twin network with parameters $a^l_{\theta} \theta^{l}$ while freezing the parameters $\theta^{l}$. The loss function is minimized by the condition mentioned in \Cref{def:critical}. 
\begin{algorithm}[h]
    \caption{\texttt{AutoInit} (SGD)}
    \label{alg:j_general}
    \begin{algorithmic}
        \State {\textbf{Input:}} Model $\mathcal M(\{\sigma^l_\theta;\, a^l_{\theta}(t) \; | \; \forall \; 1 \leq l \leq L\,, \forall \theta^l \})$, Loss function $\mathcal L(\{\mathcal J^{l,l+1}\}_{l=1}^{L})$, $T$, $\epsilon$, and $\eta$.
        \State \textbf{Set} $t=0$ and $\{a_\theta^l(0)=1\}$
        \State \textbf{Evaluate} $\mathcal L(0)$
        \While {$0 \leq t < T$ and $\mathcal L(t) > \epsilon$}
        \State $a^l_\theta(t+1) = a^l_\theta(t) - \eta \nabla_{a^l_\theta} \mathcal L(t)$
        \State \textbf{Evaluate} $\mathcal L(t+1)$
        \EndWhile
        \State \textbf{Return} $\mathcal M(\{\sigma^l_\theta = \sigma^l_{\theta} a^l_{\theta}(t) ;\, 1\; | \; \forall \; 1 \leq l \leq L\,, \forall \theta^l \})$
    \end{algorithmic}
\end{algorithm}

In practice, for speed and memory reasons we use an unbiased estimator \citep{hoffman2019robust} of APJN in \Cref{alg:j_general}, defined as
\begin{align}\label{eq:j_est}
    \hat {\mathcal{J}}^{l, l+1} \equiv \frac{1}{N_v} \sum_{\mu=1}^{N_v} \left[\frac{1}{|B| N_l} \sum_{j=1}^{N_{l+1}} \sum_{k=1}^{N_{l+1}} \sum_{i=1}^{N_{l}} \sum_{x, x' \in B} \frac{\partial (v_{\mu j} \bar{h}^{l+1}_j(x'))}{\partial \bar{h}^{l}_i(x)} \frac{\partial (v_{\mu k} \bar{h}^{l+1}_k(x'))}{\partial \bar{h}^{l}_i(x)} \right] \,,
\end{align}
where each $v_{\mu i}$ is a unit Gaussian random vector for a given $\mu$. The Jacobian-Vector Product (JVP) structure in the estimator speeds up the computation by a factor of $N_{l+1} / N_v$ and consumes less memory at the cost of introducing some noise. 

In \Cref{sec:auto} we analyze $\texttt{AutoInit}$ for multi-layer perceptron (MLP) networks. Then we discuss the problem of exploding and vanishing gradients of the tuning itself; and derive bounds on the learning rate for ReLU or linear MLPs. In \Cref{sec:bn} we extend the discussion to BatchNorm and provide a strategy for using $\texttt{AutoInit}$ for a general network architecture. In \Cref{sec:exp} we provide experimental results for more complex architectures: VGG19\_BN and ResMLP-S12.

\section{AutoInit for MLP networks}
\label{sec:auto}
MLPs are described by the following recurrence relation for preactivations
\begin{align}\label{eq:mlp_preact}
    h^{l+1}_i(x) = \sum_{j=1}^{N_l} W^{l+1}_{ij} \phi(h^l_j(x)) + b^{l+1}_i  \,.
\end{align}
Here $x$ is an input vector, weights $W^{l+1}_{ij} \sim \mathcal N(0, \sigma_w^2/N_l)$ and biases $b^{l+1}_i \sim \mathcal N(0, \sigma_b^2)$ are collectively denoted as $\theta^{l+1}$. We assume $\phi$ is a Lipschitz activation function throughout this paper.

For a network with $L$ hidden layers, in infinite width limit $N_l \rightarrow \infty$, preactivations \{$h^l_i(x) \,|\, 1 \leq l \leq L, \forall i \in N_l\}$ are Gaussian Processes (GPs). The distribution of preactivations is then determined by the Neural Network Gaussian Process (NNGP) kernel
\begin{align}
    \mathcal K^{l}(x, x') = \mathbb E_{\theta} \left[ h^l_i(x) h^l_i(x') \right] \,,
\end{align}
which value is independent of neuron index $i$. The NNGP kernel can be calculated recursively via
\begin{align}
    \mathcal K^{l+1}(x, x') = \sigma_w^2 \mathbb E_{h_i^l(x), h_i^l(x') \sim \mathcal N(0, \mathcal K^l(x, x'))} \left[\phi\left(h_i^l(x)\right) \phi\left(h_i^l(x')\right) \right] + \sigma_b^2 \,.
\end{align}
Note that we have replaced the average over parameter initializations $\mathbb{E}_\theta[\cdot]$ with an average over preactivation-distributions $\mathbb E_{h_i^l(x), h_i^l(x') \sim \mathcal N(0, \mathcal K^l(x, x'))} [\cdot]$; which are interchangeable in the infinite width limit \citep{lee2018deep, roberts2021principles}. Critical initialization of such a network is defined according to \Cref{def:critical}.

In practice, we define a twin network with extra parameters, for MLP networks the twin preactivations can be written as
\begin{align}\label{eq:twin_preact}
    h^{l+1}_i(x) = \sum_{j=1}^{N_l} a_W^{l+1} W^{l+1}_{ij} \phi(h^l_j(x)) + a_b^{l+1} b^{l+1}_i \,,
\end{align}
where $a_{\theta}^{l+1} \equiv \{a^{l+1}_W, a^{l+1}_b\}$ are auxiliary parameters that will be tuned by \Cref{alg:j_train}. 
\begin{algorithm}[h]
    \caption{\texttt{AutoInit} for MLP (SGD)}
    \label{alg:j_train}
    \begin{algorithmic}
        \State {\textbf{Input:}} Model $\mathcal M(\{\sigma_w, \sigma_b, a_W^l(t), a_b^l(t) \;| \; \forall 1 \leq l \leq L \})$, Loss function $\mathcal L(\{\mathcal J^{l,l+1}\}_{l=1}^{L})$, $T$, $\epsilon$, and $\eta$.
        \State \textbf{Set} $t=0$, $\{a_W^l(0)=1\}$ and $\{a_b^l(0)=1\}$
        \State \textbf{Evaluate} $\mathcal L(0)$
        \While {$0 \leq t < T$ and $\mathcal L(t) > \epsilon$}
        \State $a^l(t+1) = a^l(t) - \eta \nabla_{a^l} \mathcal L(t)$
        \State \textbf{Evaluate} $\mathcal L(t+1)$
        \EndWhile
        \State \textbf{Return} $\mathcal M(\{a^l_W(t) \sigma_w, \, a^l_b(t) \sigma_b, 1,\, 1 \;| \; \forall 1 \leq l \leq L \})$
    \end{algorithmic}
\end{algorithm}

In \Cref{alg:j_train}, one may also return  $\mathcal M(\{\sigma_w, \sigma_b, a_W^l(t), a_b^l(t) \;| \; \forall 1 \leq l \leq L \})$, while freezing all $a^l_{\theta}$. However, this leads to different training dynamics while updating weights and biases. Alternatively, one can leave the auxiliary parameters trainable, but in practice this leads to unstable training dynamics.

\paragraph{Loss function} The choice of loss function $\mathcal L$ is important. We will use the following loss
\begin{align}\label{eq_loss_sq_J}
    \mathcal L_{\log} = \frac{1}{2} \sum_{l=1}^L \left[\log(\mathcal J^{l, l+1})\right]^2 \,,
\end{align}
We will refer to \eqref{eq_loss_sq_J} as Jacobian Log Loss (JLL).

This definition is inspired by the factorization property \eqref{eq:factor}, which allows one to optimize each of the partial Jacobian norms independently. Thus the tuning dynamics is less sensitive to the depth. One could naively use $\log(\mathcal J^{0, L+1})^2$ as a loss function, however optimization will encounter the same level of exploding or vanishing gradients problem as \eqref{eq:grad}. One may worry that the factorization property will be violated for $t>0$, due to the possible correlation across all $\{a^l(t)\}$. It turns out that the correlation introduced by \Cref{alg:j_train} does not change the fact that all weights and biases are iid, ensuring that \eqref{eq:factor} holds for any $t \geq 0$.

Another choice for the loss is Jacobian Square Loss (JSL), defined as $\mathcal L_2 = \frac{1}{2} \sum_{l=1}^L \left(\mathcal J^{l, l+1}  - 1 \right)^2$. However JSL has poor convergence properties when $\mathcal J^{l, l+1} \gg 1$. One may further restrict the forward pass by adding terms that penalize the difference between $\mathcal K^l(x, x)$ and $\mathcal K^{l+1}(x,x)$. For brevity, we leave these discussions for the appendix.

\paragraph{Exploding and Vanishing Gradients} While the objective of \Cref{alg:j_train} is to solve the exploding and vanishing gradients problem, the \Cref{alg:j_train} itself has the same problem, although not as severe. 

Consider optimizing MLP networks using $\mathcal L_{\log}$, where the forward pass is defined by \eqref{eq:twin_preact}. Assuming the input data $x$ is normalized, the SGD update (omit x) of $a^{l}_{\theta}$ at time $t$ can be written as
\begin{align}\label{eq:a_update} 
    a_{\theta}^{l+1}(t+1) - a_{\theta}^{l+1}(t) = - \eta \sum_{l' \geq l}^L  \frac{\partial \log \mathcal J^{l', l'+1}(t)}{\partial a_{\theta}^{l+1}(t)} \log \mathcal J^{l', l'+1}(t) 
\end{align}
For a deep neural network, i.e. $|L - l| \gg 1$ holds for some $l$, the depth dependent term of \eqref{eq:a_update} can lead to exploding or vanishing gradients problems. We will show next that this is not the familiar exploding or vanishing gradients problem.

First we would like to explain the vanishing gradients problem for $a_W^{l+1}$. Rewrite the right hand side of \eqref{eq:a_update} as 
\begin{align}\label{eq:iso}
    - \eta \sum_{l' \geq l}^L  \frac{\partial \log \mathcal J^{l', l'+1}(t)}{\partial a_{W}^{l+1}(t)} \log \mathcal J^{l', l'+1}(t) = - \eta \frac{2}{a_W^{l+1}(t)} \log \mathcal J^{l, l+1}(t) + (l' > l\; \mathrm{terms}) \,.
\end{align}
Vanishing gradients can only occur if the isolated term is exactly canceled by the other terms for all $t \geq 0$, which does not happen in practice. 

To discuss the exploding gradients problem for $a_W^{l+1}$ we consider the update of $a_W^{l+1}$ (omit t). Depth dependent terms can be written as
\begin{align}\label{eq:tauto_aw} 
    \sum_{l'>l}^L \frac{\partial \log \mathcal J^{l',l'+1}}{\partial a_W^{l+1}} \log \mathcal J^{l', l'+1} = \sum_{l'>l}^L \left(\frac{4\chi^{l'}_{\Delta}}{a_W^{l+1} \mathcal J^{l', l'+1}}  \chi^{l'-1}_{\mathcal K} \cdots \chi^{l+1}_{\mathcal K} \mathcal K^{l+1}(x, x)\right) \log \mathcal J^{l', l'+1} \,,
\end{align}
where we have defined two new quantities $\chi^{l'}_{\Delta} \equiv (a_W^{l'+1} \sigma_w)^2 \mathbb E_{\theta} \left[\phi''(h^{l'}_i) \phi''(h^{l'}_i) + \phi'(h^{l'}_i) \phi'''(h^{l'}_i) \right]$ and $\chi^{l'}_{\mathcal K} \equiv (a_W^{l'+1} \sigma_w)^2 \mathbb E_{\theta} \left[\phi'(h^{l'}_i) \phi'(h^{l'}_i) + \phi(h^{l'}_i) \phi''(h^{l'}_i) \right]$. We note that the exploding gradients problem for $a_W^{l+1}$ in $\texttt{AutoInit}$ is not severe for commonly used activation functions:
\begin{itemize}
    \item $\tanh$-like bounded odd activation functions: $\chi_{\mathcal K}^{l'} \leq 1$ holds and $\mathcal K^l(x,x)$ saturates to a constant for large $l$. Thus the divergence problem of \eqref{eq:tauto_aw} is less severe than the one of \eqref{eq:grad} when $\mathcal J^{l', l'+1} > 1$.
    \item $\mathrm{ReLU}$: $\chi^{l'}_{\Delta}=0$.
    \item $\mathrm{GELU}$: The sum in \eqref{eq:tauto_aw} scales like $O(L \prod_{\ell=1}^L \chi^{\ell}_{\mathcal K})$ for large $L$, which may lead to worse exploding gradients than \eqref{eq:grad} for a reasonable $L$. Fortunately, for $\chi^{l'}_{\mathcal K} > 1$ cases, $\chi_{\Delta}^{l'}$ is close to zero. As a result, we find numerically that the contribution from \eqref{eq:tauto_aw} is very small.
\end{itemize}

For $a_b^{l+1}$, there is no isolated term like the one in \eqref{eq:iso}. Then the update of $a_b^{l+1}$ is proportional to
\begin{align}\label{eq:tauto_ab}
    \sum_{l'>l}^L \frac{\partial \log \mathcal J^{l',l'+1}}{\partial a_b^{l+1}} \log \mathcal J^{l, l+1} = \sum_{l'>l}^L \left(\frac{4 a_b^{l+1}}{\mathcal J^{l', l'+1}} \chi^{l'}_{\Delta} \chi^{l'-1}_{\mathcal K} \cdots \chi^{l+1}_{\mathcal K} \sigma_b^2 \right) \log \mathcal J^{l, l+1} \,.
\end{align}
Comparing \eqref{eq:tauto_ab} and \eqref{eq:tauto_aw}, it is clear that the exploding gradients problem for $a_b^{l+1}$ is the same as that for $a_W^{l+1}$, hence not severe for common activation functions. The vanishing gradients problem is seemingly more serious, especially for $\sigma_b=0$. However, the vanishing gradients for $a_b^{l+1}$ does not prevent \texttt{AutoInit} from reaching a critical initialization: 
\begin{itemize}
    \item For $\sigma_b > 0$, as $a_W^{l+1}$ gets updated, the update in \eqref{eq:tauto_ab} gets larger with time t.
    \item For $\sigma_b=0$ the phase boundary is at $\sigma_w \geq 0$, which can be reached by $a_W^{l+1}$ updates.
\end{itemize}

\subsection{Linear and ReLU networks}
In general, it is hard to predict a good learning rate $\eta$ for the \Cref{alg:j_train}. However, for ReLU (and linear) networks, we can estimate the optimal learning rates. We will discuss ReLU in detail. Since $a_b^l$ can not receive updates in this case, we only discuss updates for $a_W^l$. 

Different APJN $\{\mathcal J^{l,l+1}\}$ for ReLU networks evolve in time independently according to
\begin{align}\label{eq:relu_jupdate}
    \sqrt{\mathcal J^{l,l+1}(t+1)} - \sqrt{\mathcal J^{l,l+1}(t)} = -\eta \frac{\sigma_w^2}{\sqrt{\mathcal J^{l,l+1}(t)}} \log \mathcal J^{l,l+1}(t) \,.
\end{align}

Then one can show that for any time $t$:
\begin{align}\label{eq:eta_t}
    \eta_t < & \min_{1 \leq l \leq L} \left\{\frac{2\left( \sqrt{\mathcal J^{l,l+1}(t)} - 1 \right) \sqrt{\mathcal J^{l,l+1}(t)} }{\sigma_w^2 \log \mathcal J^{l,l+1}(t) } \right\}
\end{align}
guarantees a convergence. In this case, the value of $\mathcal J^{l, l+1}(t)$ can be used to create a scheduler for \Cref{alg:j_train}. 

Moreover, one can solve \eqref{eq:relu_jupdate} and find a learning rate that allows the \Cref{alg:j_train} to converge in 1-step:
\begin{align}\label{eq:1step_lr}
    \eta^l_{\mathrm{1-step}} = \frac{\left( \sqrt{\mathcal J^{l,l+1}(0)} - 1 \right) \sqrt{\mathcal J^{l,l+1}(0)} }{\sigma_w^2 \log \mathcal J^{l,l+1}(0) } \,,
\end{align}

Next we study the dynamics of the optimization while using a single learning rate $\eta$. We estimate the allowed maximum learning rate $\eta_0$ at $t=0$ using $\mathcal J^{l, l+1}(0) = (a_W^{l+1} \sigma_w)^2 / 2$:
\begin{align}\label{eq:eta_0_jl}
    \eta_0 = \frac{\left(a_W^{l+1}\sigma_w - \sqrt{2}\right) a_W^{l+1}}{\sigma_w \left(\log \left[(a_W^{l+1}\sigma_w)^2 \right] - \log 2\right)}  \,.
\end{align}

In \Cref{fig:relu_jac}, we checked our results with \cref{alg:j_train}. All $\mathcal J^{l,l+1}(t)$ values plotted in the figure agree with the values we obtained by iterating \eqref{eq:relu_jupdate} for $t$ steps. The gap between $\eta_0$ and trainable regions can be explained by analyzing \eqref{eq:eta_t}. Assuming at time $t$: $|\mathcal J^{l,l+1}(t) - 1| < |\mathcal J^{l,l+1}(0) - 1|$ holds. For $\mathcal J^{l,l+1} < 1$ if we use a learning rate $\eta$ that satisfies $\eta_0 < \eta < \eta_t$, there is still a chance that \Cref{alg:j_train} can converge. For $\mathcal J^{l,l+1} > 1$ if $\eta_0 > \eta > \eta_t$ holds, \Cref{alg:j_train} may diverge at a later time. A similar analysis for JSL is performed in the appendix.
\begin{figure}[h]
    \centering
    \includegraphics[width=0.75\textwidth]{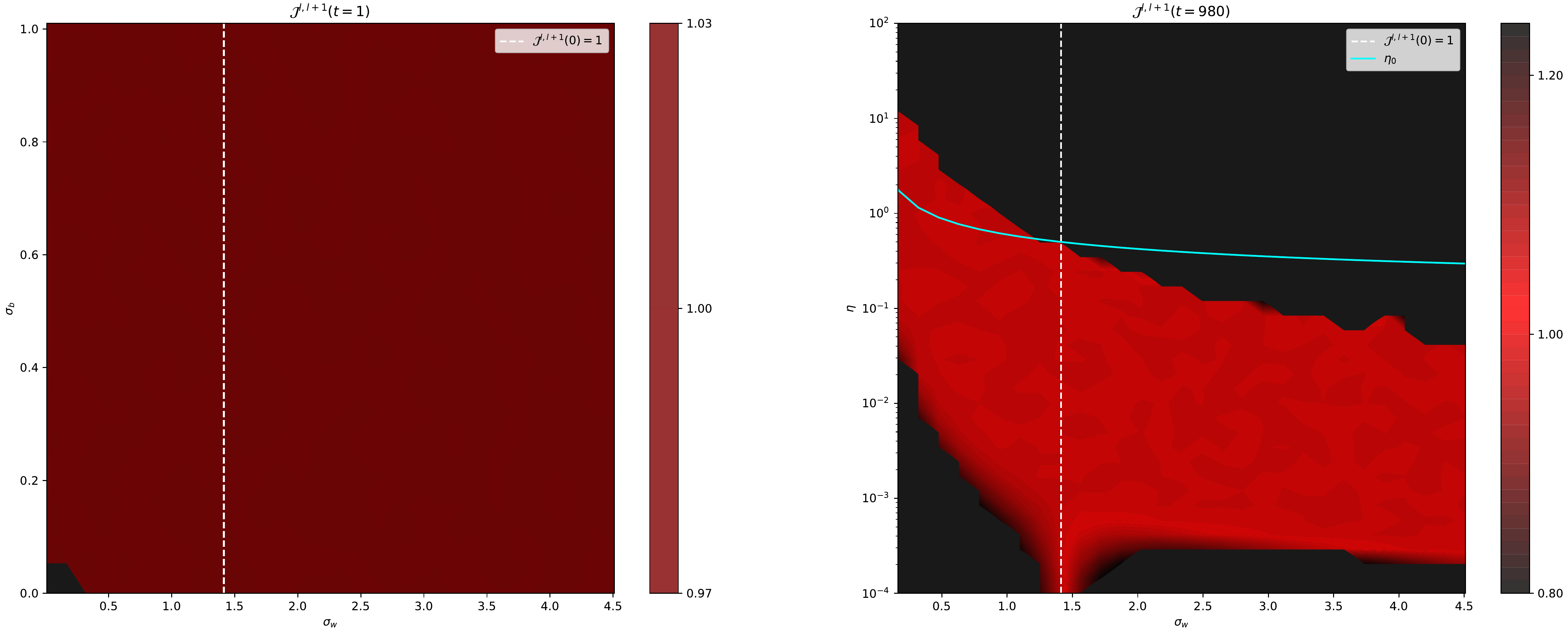}
    \caption{$\mathcal J^{l, l+1}(t)$ plot for $L=10$, $N_l=500$ ReLU MLP networks, initialized with $a_W^l=1$. From left to right: 1) $\mathcal{J}^{l, l+1}(t=1)$ values are obtained by tuning with \Cref{alg:j_train} using $\eta_{\mathrm{1-step}}$ with JLL; 2) we scan in $\eta$-$\sigma_w$ plane using $\sigma_b=0$ networks, tune $\mathcal J^{l, l+1}(t)$ using \Cref{alg:j_train} with JLL for $980$ steps. Only $0.8< \mathcal J^{l,l+1} <1.25$ points are plotted;  All networks are trained with normalized CIFAR-10 dataset, $|B|=256$.}
    \label{fig:relu_jac}
\end{figure}

\section{BatchNorm, Residual Connections and General Strategy}
\label{sec:bn}
\paragraph{BatchNorm and Residual Connections} For MLP networks, the APJN value is only a function of $t$ and it is independent of $|B|$. This property holds except when there is a BatchNorm (BN).

We consider a Pre-BN MLP network with residual connections. The preactivations are given by
\begin{align}\label{eq:bnmlp_preact}
    h^{l+1}_{x; i} = \sum_{j=1}^N a_W^{l+1} W^{l+1}_{ij} \phi(\tilde h^l_{x; j}) + a_b^{l+1} b^{l+1}_i + \mu h^l_{x;i} \,,
\end{align}
where we label different inputs with indices $x,x^\prime, \cdots$ and $\mu$ quantifies the strength of the residual connections (common choice is $\mu=1$). At initialization, the normalized preactivations are defined as
\begin{align}
    \tilde h^l_{x; i} = \frac{h^l_{x; i} - \frac{1}{|B|}\sum_{x' \in B} h^l_{x'; i} }{\sqrt{\frac{1}{|B|} \sum_{x' \in B} \left( h^l_{x'; i} \right)^2 - \left(\frac{1}{|B|} \sum_{x' \in B} h^l_{x'; i}\right)^2  }} \,.
\end{align}
The change in batch statistics leads to non-trivial $\mathcal J^{l,l+1}$ values, which can be approximated using \Cref{conj:bn}.

\begin{conjecture}[APJN with BatchNorm]\label{conj:bn}
    In infinite width limit and at large depth $l$, APJN of Pre-BN MLPs \eqref{eq:bnmlp_preact} converges to a deterministic value determined by the NNGP kernel as $B \rightarrow \infty$:
    \begin{align}
        \mathcal J^{l, l+1} \xrightarrow{|B| \rightarrow \infty} & (a_W^{l+1} \sigma_w)^2 \mathbb E_{\tilde h^l_{x;j} \sim \mathcal N(0, 1)} \left[\phi'(\tilde h^l_{x;j}) \phi'(\tilde h^l_{x;j}) \right] \frac{1}{\mathcal K^l_{xx} - \mathcal K^l_{xx'}} + \mu^2 \,,
    \end{align}
    where the actual value of indices $x'$ and $x$ is not important, as long as $x \neq x'$.
\end{conjecture}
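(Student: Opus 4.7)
The plan is to compute the APJN directly by expanding the Jacobian of the recursion \eqref{eq:bnmlp_preact} and averaging over $\theta$. Chain-ruling \eqref{eq:bnmlp_preact} and using that BatchNorm acts per-feature (so the BN Jacobian is diagonal in the feature index),
\[
\frac{\partial h^{l+1}_{x';j}}{\partial h^l_{x;i}} = a_W^{l+1} W^{l+1}_{ji}\, \phi'(\tilde h^l_{x';i})\, J^{\mathrm{BN}}_{xx';i} + \mu\, \delta_{ji}\delta_{xx'},
\]
where $J^{\mathrm{BN}}_{xx';i} \equiv \partial \tilde h^l_{x';i}/\partial h^l_{x;i}$. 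Squaring produces three contributions to the APJN: a weight-squared term, a residual-squared term, and a cross term. The cross term is linear in $W^{l+1}$, which at infinite width is independent of every layer-$l$ quantity and has zero mean, so its expectation vanishes. The residual-squared term collapses immediately to $\mu^2$ after the two Kronecker deltas are resolved.

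What remains is the weight-squared term. Using $\mathbb E[(W^{l+1}_{ji})^2]=\sigma_w^2/N$ and the infinite-width independence of $W^{l+1}$ from $\tilde h^l$ and from $J^{\mathrm{BN}}$, the sum over $j$ collapses and the term reduces to
\[
\frac{(a_W^{l+1}\sigma_w)^2}{|B|\,N}\sum_{i,x,x'} \mathbb E\!\left[(\phi'(\tilde h^l_{x';i}))^2 \, (J^{\mathrm{BN}}_{xx';i})^2\right].
\]
Writing $u_x = h^l_{x;i}-m_i$ with $m_i$ and $s_i^2$ the batch mean and variance of feature $i$, explicit differentiation of BN gives $J^{\mathrm{BN}}_{xx';i} = (\delta_{xx'}-1/|B|)/s_i - u_x u_{x'}/(|B| s_i^3)$. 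Squaring and summing over $x$ while invoking the batch identities $\sum_x u_x = 0$ and $\sum_x u_x^2 = |B| s_i^2$ causes the off-diagonal and $1/|B|$ contributions to combine cleanly, leaving a $\phi'$-weighted diagonal piece $s_i^{-2}\sum_{x'}(\phi'(\tilde h^l_{x';i}))^2$ plus a subleading correction of order $1/|B|$.

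Finally, taking $|B|\to\infty$, empirical batch averages concentrate on NNGP expectations: $\tfrac{1}{|B|}\sum_{x'}(\phi'(\tilde h^l_{x';i}))^2 \to \mathbb E_{\tilde h \sim \mathcal N(0,1)}[\phi'(\tilde h)^2]$, using that BN produces exactly zero-mean, unit-variance features in the infinite-batch limit, and $s_i^2 = \tfrac{1}{|B|}\sum_x (h^l_{x;i})^2 - m_i^2 \to \mathcal K^l_{xx}-\mathcal K^l_{xx'}$ for any $x\neq x'$. That $\mathcal K^l_{xx'}$ is independent of which off-diagonal pair is chosen is precisely where the large-depth hypothesis enters, since this symmetry is generically only reached at the BN fixed point of the NNGP recursion. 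Reassembling the weight-squared and residual-squared contributions produces the conjectured formula.

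The main obstacle is the BatchNorm bookkeeping. Each off-diagonal ($x\neq x'$) entry of $J^{\mathrm{BN}}$ is only $O(1/|B|)$, but there are $|B|^2$ such pairs, so they cannot simply be discarded; they must be combined via the identities for $u_x$ so that their $O(1)$ aggregate cancels cleanly against the $1/|B|$ correction on the diagonal. A secondary subtlety, and the reason the statement is phrased as a conjecture rather than a theorem, is rigorously justifying that $W^{l+1}$ decorrelates from arbitrary nonlinear functions of BN-normalized layer-$l$ preactivations in the infinite-width limit, and that the BN fixed point of the NNGP recursion is reached uniformly in depth; these are standard heuristics in mean-field NNGP analyses but nontrivial to make fully rigorous for this Pre-BN residual architecture.
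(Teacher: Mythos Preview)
Your proposal is correct and follows essentially the same route as the paper's own (non-rigorous) argument in the appendix: explicit computation of the BatchNorm Jacobian, squaring, dropping the cross term by zero-mean of $W^{l+1}$, summing over the batch index, and then invoking concentration of the batch variance to $\mathcal K^l_{xx}-\mathcal K^l_{xx'}$ together with the standard-normal limiting law for $\tilde h^l$. The only differences are cosmetic: the paper writes everything through the projector $P_{xx'}=\delta_{xx'}-1/|B|$ (so that your $u_x=\sum_{x'}P_{xx'}h^l_{x';i}$ and $|B|\,s_i^2=\sum_x u_x^2$) and packages the two concentration statements as separate sub-conjectures, while you are a bit more explicit about why the cross term vanishes and about where exactly the large-depth hypothesis is consumed.
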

\begin{remark}
    Under the condition of \Cref{conj:bn} $\mathcal J^{l, l+1} \xrightarrow{|B| \rightarrow \infty} 1 + O(l^{-1})$, if $\mu=1$. The finite $|B|$ correction is further suppressed by $l^{-1}$.
\end{remark}
In \Cref{fig:bn_relu} we show numerical results that can justify our conjecture, where we empirically find that finite $|B|$ correction for $|B| \geq 128$. Analytical details are in appendix. Similar results without residual connections have been obtained for finite $|B|$ by \citet{yang2018mean}. 
\begin{figure}[h]
    \centering
    \includegraphics[width=1.0\textwidth]{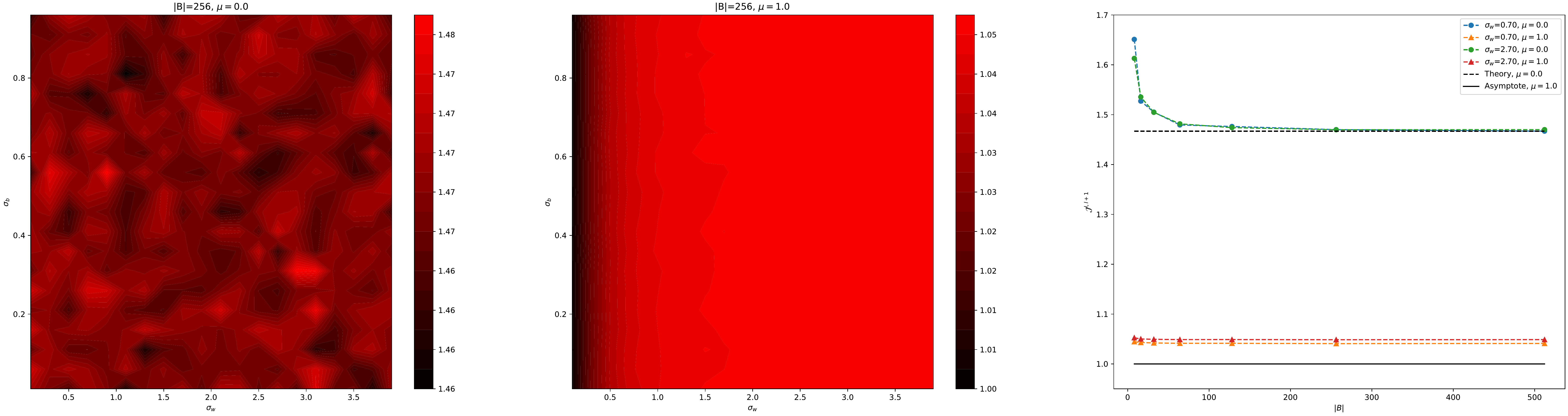}
    \caption{$\mathcal J^{l,l+1}(0)$ phase diagrams for $|B|=256$ in $\sigma_b-\sigma_w$ plane ($\mu=0$ and $\mu=1$); $\mathcal J^{l, l+1}$-$|B|$ plot. From left to right: 1) Pre-BN MLP networks with $\mu=0$ are everywhere chaotic; 2) Pre-BN MLP networks with $\mu=1$ are critical everywhere; 3) For $|B|\geq 128$, the finite $|B|$ corrections are negligible. In all plots we use $L=30$, $N_l=500$, $a_W^l(0)=1$ and averaged over 50 initializations.}
    \label{fig:bn_relu}
\end{figure}

\paragraph{General Strategy} For general network architectures, we propose the following strategy for using \Cref{alg:j_general} with normalized inputs:
\begin{itemize}
    \item If the network does not have BatchNorm, use the algorithm with $|B|=1$.
    \item If the network has BatchNorm, and the user has enough resources, use the algorithm with a $|B|$ which will be used for training. When $|B|$ is large, one should make $\mathcal J^{l,l+1}$ vs. $|B|$ plots like the one in \Cref{fig:bn_relu}, then choose a $|B|$ that needs less computation.
    \item When resources are limited, one can use a non-overlapping set $\{\mathcal{J}^{l, l+k}\}$ with $k>1$ to cover the whole network.
\end{itemize}
The computational cost of the algorithm depends on $k$ and $|B|$.

\section{Experiments}
\label{sec:exp}
In this section, we use a modified version of $\mathcal L_{\log}$, where we further penalize the ratio between NNGP kernels from adjacent layers. The Jacobian-Kernel Loss (JKL) is defined as:
\begin{align}\label{eq:jkle}
    \mathcal L_{\mathcal J \mathcal K\log} = \frac{1}{2} \sum_{l=0}^{L+1} \left[\log(\mathcal J^{l, l+1})\right]^2 + \frac{\lambda}{2} \sum_{l=0}^{L+1} \left[\log\left (\frac{\mathcal K^{l+1}(x, x)}{\mathcal K^l(x,x)} \right)\right]^2 \,,
\end{align}
where we introduced an extra hyperparameter $\lambda$ to control the penalization strength. We also included input and output layers. Both APJNs and NNGP kernels will be calculated using flattened preactivations.

\subsection{ResMLP}

ResMLP \citep{touvron2021resmlp} is an architecture for image recognition built entirely on MLPs. It offers competitive performance in both image recognition and machine translation tasks. The architecture consists of cross-channel and cross patch MLP layers, combined with residual connections. The presence of residual connections and the absence of normalization techniques such as LayerNorm \citep{ba2016layer} or BatchNorm \citep{ioffe2015batch} render ResMLP to be initialized off criticality.
To mitigate this issue, ResMLP architecture utilizes LayerScale \citep{touvron2021cait}; which multiplies the output residual branch with a trainable matrix, initialized with small diagonal entries.

\paragraph{CIFAR-10} Here we obtain a critical initialization for ResMLP-S12 using \Cref{alg:j_train} with loss \eqref{eq:jkle}, with $a^l_{\theta}$ introduced for all layers. In our initialization, the ``smallnes'' is distributed across all parameters of the residual block, including those of linear, affine normalization and LayerScale layers. As we show in \Cref{fig:resmlp}, Kaiming initialization is far from criticality. $\texttt{AutoInit}$ finds an initialization with almost identical $\{\mathcal J^{l, l+1} \}$ and similar $\{\mathcal K^{l, l+1}(x, x)\}$ compared to the prescription proposed by \citet{touvron2021resmlp}. 
\begin{figure}[h]
    \centering
    \includegraphics[width=\textwidth]{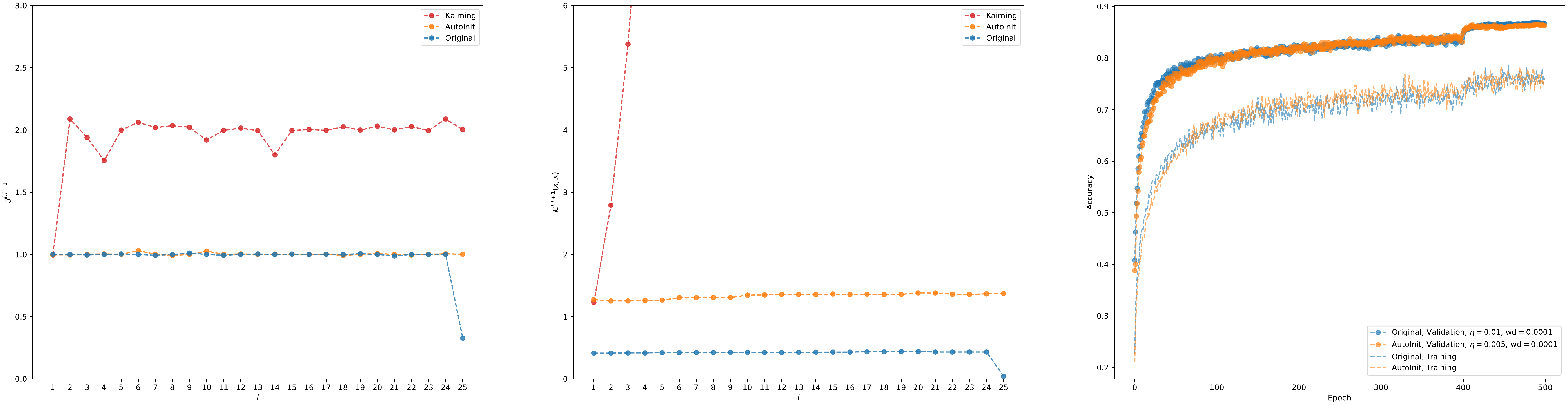}
    \caption{From left to right: 1) and 2) Comparing $\mathcal J^{l, l+1}$ and $\mathcal K^l(x,x)$ for ResMLP-S12 for Kaiming, original and $\texttt{AutoInit}$ initializations. Depth $l$ is equal to the number of residual connections. The network function in the $\texttt{AutoInit}$ case is very close to identity at initialization. 3) Training and validation accuracy. Both, original and \texttt{AutoInit}, models are trained on CIFAR-10 dataset for 600 epochs using \texttt{LAMB} optimizer\citep{You2020Large} with $|B|=256$. The learning rate is decreased by a factor of 0.1 at 450 and 550 epochs. Training accuracy is measured on training samples with Mixup $\alpha=0.8$. Both models interpolate the original training set.}
    \label{fig:resmlp}
\end{figure}

\paragraph{ImageNet \citep{liILSVRC15}} We report $74.0\%$ top-1 accuracy for ResMLP-S12 initialized using \texttt{AutoInit}, whereas the top-1 accuracy reported in \citep{touvron2021resmlp} for the same architecture is $76.6\%$. The model has $15$ million parameters. We used a setup similar to the one in original paper, which is based on timm library \citep{rw2019timm} under Apache-2.0 license \citep{apachev2}. However, we made the following modifications in our training: 1) We use learning rate $\eta=0.001$ and $|B|=1024$. 2) We use mixed precision. 3) We do not use \texttt{ExponentialMovingAverage}. The training was performed on two NVIDIA RTX 3090 GPUs; and took around $3.5$ days to converge (400 epochs).

The auto-initialized model are obtained by tuning the Kaiming initialization using \Cref{alg:j_train} with $\mathcal L_{\mathcal J \mathcal K\log}(\lambda=0.5)$, $\eta=0.03$ and $|B|=32$ for 500 steps.
\subsection{VGG}
VGG \citep{simonyan2014very} is an old SOTA architecture, which was notoriously difficult to train before Kaiming initialization was invented. The BatchNorm variants $\mathrm{VGG19\_BN}$ further improve the training speed and performances compared to the original version.  PyTorch version of VGG \citep{NEURIPS2019_9015} is initialized with $\mathrm{fan\_out}$ Kaiming initialization \citep{he2015delving}. In \Cref{fig:bn_relu} we show that the BatchNorm makes Kaiming-initialized ReLU networks chaotic.

We obtain a close to critical initialization using \Cref{alg:j_train} for $\mathrm{VGG19\_BN}$, where we introduce the auxiliary parameters $a^l_{\theta}$ for all BatchNorm layers. $\mathcal J^{l, l+1}$ is measured by the number of composite (Conv2d-BatchNorm-ReLU) blocks or MaxPool2d layers.

We compared $\mathcal J^{l, l+1}$, $\mathcal K^l(x,x)$ and accuracies on CIFAR-10 datasets between auto-initialized model and the one from PyTorch\citep{krizhevsky2009learning}, see \Cref{fig:vgg}. 
\begin{figure}[h]
    \centering
    \includegraphics[width=\textwidth]{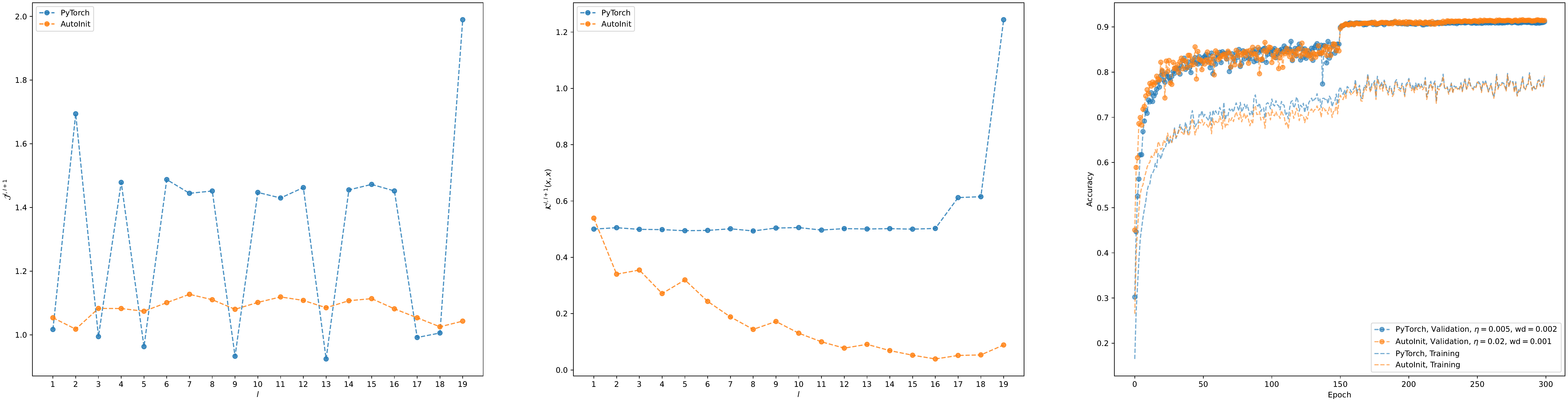}
    \caption{From left to right: 1) and 2) comparing $\mathcal J^{l, l+1}$ and $\mathcal K^l(x,x)$ between PyTorch version $\mathrm{VGG19\_BN}$ and \texttt{AutoInit} version, we ensure $\mathcal J^{l, l+1}=1$ with a high priority ($\lambda=0.05$); 3) training and validation accuracy. We train both models on CIFAR-10 dataset using SGD with $\mathrm{momentum}=0.9$ and $|B|=256$ for 300 epochs, where we decrease the learning rate by a factor of 0.1 at 150 and 225 epochs. Training accuracy is measured on training samples with mixup $\alpha=0.8$. Both models interpolate the original training set.}
    \label{fig:vgg}
\end{figure}

\section{Conclusions}
\label{sec:conclu}
In this work we have introduced an algorithm, \texttt{AutoInit}, that allows to initialize an arbitrary feed-forward deep neural network to criticality. \texttt{AutoInit} is an unsupervised learning algorithm that forces norms of all nearby partial Jacobians to have a unit norm via minimizing the loss function \eqref{eq_loss_sq_J}. A slight variation of the \texttt{AutoInit} also tunes the forward pass to ensure that gradients in all layers of a DNN are well-behaved.

To gain some intuition about the algorithm we have solved the training dynamics for MLPs with ReLU activation and discussed the choice of hyperparameters for the tuning procedure that ensures its convergence.

Then we have evaluated the performance of \texttt{AutoInit}-initialized networks against initialization schemes used in literature. We considered two examples: ResMLP architecture and VGG. The latter was notoriously difficult to train at the time it was introduced. \texttt{AutoInit} finds a good initialization (somewhat close to Kaiming) and ensures training. ResMLP uses a variation of ReZero initialization scheme that puts it close to dynamical isometry condition. \texttt{AutoInit} finds a good initialization that appears very different from the original, however the network function is also very close to the identity map at initialization. In both cases the performance of the \texttt{AutoInit}-initialized networks is competitive with the original models. We emphasize that \texttt{AutoInit} removes the necessity for trial-and-error search for a working initialization.

We expect that \texttt{AutoInit} will be useful in automatic neural architecture search tasks as well as for general exploration of new architectures.

\begin{ack}
T.H., D.D. and A.G. were supported, in part, by the NSF CAREER Award DMR-2045181 and by the Salomon Award.
\end{ack}
\bibliographystyle{plainnat}
\bibliography{refs}


\newpage
\appendix
\section{Experimental Details}
\Cref{fig:relu_jac}: The the second panel is made of 1200 points, each point takes around $1.5$ minutes running on a single single NVIDIA RTX 3090 GPU.

\Cref{fig:bn_relu}: We scanned over $400$ points for each phase diagram, which overall takes around $5$ hours on a single NVIDIA RTX 3090 GPU.

\Cref{fig:resmlp}: We use \cref{alg:j_train} to tune our model on CIFAR-10 dataset. We use SGD with $\eta=0.03$ and $N_v=2$ for $392$ steps, $|B|=256$. The training curves we reported are selected from the best combination from the following hyperparameters $\mathrm{lr}=\{0.005, 0.01\}$, $\mathrm{weight\; decay}=\{10^{-5}, 10^{-4}\}$. We used RandAugment\citep{cubuk2020RandAug}, horizontal flip, Mixup with $\alpha=0.8$ \citep{zhang2018mixup} and Repeated-augmentation \citep{hoffer2020aug}. All of our results are obtained using a single NVIDIA RTX 3090 GPU.

\Cref{fig:vgg}: We use \cref{alg:j_train} to tune our model on CIFAR-10 dataset for $392$ steps with $\eta=0.01$  $|B|=128$ and $N_v=3$. The training curves we reported are selected from the best combination from the following hyperparameters $\mathrm{lr}=\{0.001, 0.002, 0.005, 0.01, 0.02\}$, $\mathrm{weight\; decay}=\{0.0005, 0.001, 0.002, 0.005 \}$. We used RandAugment\citep{cubuk2020RandAug}, horizontal flip and Mixup with $\alpha=0.8$ \citep{zhang2018mixup} and Repeated-augmentation \citep{hoffer2020aug}. We froze auxiliary parameters instead of scale the weights. All of our results are obtained using a single NVIDIA RTX 3090 GPU.

\Cref{figapp:relu_jac}: Exactly the same as \Cref{fig:relu_jac}, except we used JSL.

\section{Theoretical Details}
\subsection{Factorization of APJN}
We the factorization property using MLP networks in infinite width limit. This proof works for any iid $\theta^l$ where $|\theta^l|$ has some $2+\delta$ moments.

We start from the definition of the partial Jacobian, set $a^l_{\theta}=1$ for simplicity.
\begin{align}
    \nonumber
    \mathcal{J}^{l,l+2} &\equiv \frac{1}{N_{l+2}} \mathbb E_{\theta} \left[ \sum_{i=1}^{N_{l+2}}\sum_{j=1}^{N_{l}} \frac{\partial h^{l+2}_i}{\partial h^{l}_j} \frac{\partial h^{l+2}_i}{\partial h^{l}_j} \right ] \\
    \nonumber
    &= \frac{1}{N_{l+2}} \mathbb E_{\theta} \left[ \sum_{i=1}^{N_{l+2}}\sum_{j=1}^{N_l} \sum_{k,m=1}^{N_{l+1}} 
    \left(W^{l+2}_{ik} \phi'(h^{l+1}_k) \right)
    \left(W^{l+2}_{im} \phi'(h^{l+1}_m) \right)
    \left(\frac{\partial h^{l+1}_k}{\partial h^{l+1}_j} \frac{\partial h^{l+1}_m}{\partial h^{l}_j}\right)
    \right ] \\
    \nonumber
    &= \frac{\sigma_w^2}{N_{l+2} N_{l+1}} \sum_{i=1}^{N_{l+2}}\sum_{j=1}^{N_{l}} \sum_{k=1}^{N_{l+1}} \mathbb E_{\theta} \left[  \phi'(h^{l+1}_k) \phi'(h^{l+1}_k)
    \frac{\partial h^{l+1}_k}{\partial h^{l}_j}
    \frac{\partial h^{l+1}_k}{\partial h^{l}_j} \right ] \\
    \nonumber
    &= \frac{\sigma_w^2}{N_{l+2} N_{l+1}} \sum_{i=1}^{N_{l+2}}\sum_{j=1}^{N_{l}} \sum_{k=1}^{N_{l+1}} \mathbb E_{W^{l+1}, b^{l+1}} \left[ \phi'(h^{l+1}_k) \phi'(h^{l+1}_k)
    W^{l+1}_{kj}  W^{l+1}_{kj} \right] \mathbb E_{\theta} \left[\phi'(h^l_k) \phi'(h^l_k) \right ] \\
    &= \mathcal J^{l,l+1} \mathcal J^{l+1, l+2} + O\left(\frac{\chi^l_{\Delta}}{N_l} \right) \,,
\end{align}
where the $1/N_l$ correction is zero in the infinite width limit. We used the fact that in infinite width limit $h^{l+1}_k$ is independent of $h^l_k$, and calculated the first expectation value of the fourth line using integration by parts. Recall that for a single input (omit x)
\begin{align}
    \chi^{l}_{\Delta} \equiv (a_W^{l+1} \sigma_w)^2 \mathbb E_{\theta} \left[\phi''(h^{l}_i) \phi''(h^{l}_i) + \phi'(h^{l}_i) \phi'''(h^{l}_i) \right] \,.
\end{align}

\subsection{Exploding and Vanishing Gradients}
We show details for deriving \eqref{eq:tauto_aw} for MLP networks, assuming $l'>l$:
\begin{align}
    \frac{\partial \log \mathcal J^{l',l'+1}}{\partial a_W^{l+1}} =& \frac{1}{\mathcal J^{l',l'+1}}\frac{\partial \mathbb E_{h^{l'}_i \sim \mathcal N(0, \mathcal K^{l'}(x,x))} \left[(a_W^{l'+1} \sigma_w)^2 \phi'(h^{l'}_i) \phi'(h^{l'}_i) \right]}{\partial a_W^{l+1}} \nonumber \\
    =& \frac{1}{\mathcal J^{l',l'+1}}\frac{\partial}{\partial K^{l'}(x,x)} \left(\frac{{(a_W^{l'+1} \sigma_w)^2}}{\sqrt{2\pi \mathcal K^{l'}(x,x)}} \int \phi'(h^{l'}_i) \phi'(h^{l'}_i) e^{-\frac{h^{l'}_i h^{l'}_i}{2\mathcal K^{l'}(x,x)}} dh^{l'}_i \right) \frac{\partial K^l(x,x')}{\partial a_W^{l+1}}\nonumber \\
    =& \frac{2}{\mathcal J^{l',l'+1}} \chi^{l'}_{\Delta} \frac{\partial \mathcal K^{l'}(x, x)}{\partial \mathcal K^{l'-1}(x,x)} \cdots \frac{\partial \mathcal K^{l+1}(x,x)}{\partial a_W^{l+1}} \nonumber \\
    =& \frac{4}{a_W^{l+1} \mathcal J^{l',l'+1}} \chi^{l'}_{\Delta} \chi^{l'-1}_{\mathcal K} \cdots \chi^{l+1}_{\mathcal K} \mathcal K^{l+1}(x,x) \,,
\end{align}
where we calculated the derivative respect to $\mathcal K^{l'}(x,x)$, then used integration by parts to get the third line. The derivation for \eqref{eq:tauto_ab} is similar.

\subsection{ReLU Details}
\paragraph{Learning rate $\eta$} The learning rate bound \eqref{eq:eta_t} is obtained by requiring $|\sqrt{\mathcal J^{l,l+1}(t)} -1 |$ to decrease monotonically with time $t$.

\paragraph{\texorpdfstring{Derivation for $\chi_{\Delta}^l=0$}{}} This is straightforward to show by direct calculation in the infinite width limit. We set $a_W^l=1$ and ignore neuron index $i$ for simplicity. 
\begin{align}
    \chi^{l'}_{\Delta} =& \sigma_w^2 \mathbb E_{h^l \sim \mathcal N(0, \mathcal K^l(x,x))} \left[\phi''(h^{l}) \phi''(h^{l}) + \phi'(h^{l}) \phi'''(h^{l'}) \right] \nonumber \\
    &= \sigma_w^2 \mathbb E_{h^l \sim \mathcal N(0, \mathcal K^l(x,x))} \left [ \frac{d}{dh^{l}} \left( \phi'(h^l) \phi''(h^l)  \right) \right ] \nonumber \\
    &= \sigma_w^2 \mathbb E_{h^l \sim \mathcal N(0, \mathcal K^l(x,x))} \left [ \frac{d}{dh^{l}} \left( \Theta(h^l) \delta(h^l)  \right) \right ] \nonumber \\
    &= \sigma_w^2 \mathbb E_{h^l \sim \mathcal N(0, \mathcal K^l(x,x))} \left [ \frac{h^l}{\mathcal K^l(x,x)} \left( \Theta(h^l) \delta(h^l)  \right) \right ] \nonumber \\
    &= 0 \,,
\end{align}
where $\Theta(h^l)$ is Heaviside step function and $\delta(h^l)$ is Dirac delta function. To get the last line we used $h^l \delta(h^l)=0$.

\subsection{\texorpdfstring{\Cref{conj:bn}}{}}
Here we offer an non-rigorous explanation for the conjecture in the infinite $|B|$ and the infinite width limit. We use a MLP model with $a_{\theta}^l=1$ as an example.

We consider
\begin{align}
    h^{l+1}_{x; i} = \sum_{j=1}^N W^{l+1}_{ij} \phi(\tilde h^l_{x; j}) + b^{l+1}_j + \mu h^l_{x; j}\,,
\end{align}
where 
\begin{align}\label{eq:BN}
    \tilde h^l_{x; i} =& \frac{h^l_{x; i} - \frac{1}{|B|}\sum_{x' \in B} h^l_{x'; i} }{\sqrt{\frac{1}{|B|} \sum_{x' \in B} \left( h^l_{x'; i} \right)^2 - \left(\frac{1}{|B|} \sum_{x' \in B} h^l_{x'; i}\right)^2  }} \nonumber \\
    =& \frac{\sqrt{|B|} \sum_{x' \in B} P_{x x'} h^l_{x'; i}}{\sqrt{ \sum_{x \in B} \left(\sum_{x' \in B} P_{x x'} h^l_{x'; i}\right)^2 }} \,,
\end{align}
where $P_{xx'} \equiv \delta_{xx'} - 1/ |B|$. It is a projector in the sense that $\sum_{x' \in B} P_{xx'} P_{x'x''} = P_{xx''}$.

Derivative of the normalized preactivation:
\begin{align}
    \frac{\partial \tilde h^l_{x; i}}{\partial h^l_{x';j}} = \sqrt{|B|} \left(\frac{P_{xx'}}{\sqrt{\sum_{x \in B} \left(\sum_{x'' \in B} P_{x,x''} h^l_{x'';i}\right)^2}} - \frac{\sum_{x'' \in B} P_{xx''} h^l_{x''; i} \sum_{x'' \in B} P_{x'x''} h^l_{x''; i}}{\left( \sqrt{\sum_{x\in B} \left(\sum_{x'' \in B} P_{x,x''} h^l_{x'';i}\right)^2 } \right)^3} \right)\delta_{ij} \,.
\end{align}

Then the one layer APJN:
\begin{align}\label{eqapp:bn_apjn}
    \mathcal J^{l, l+1} =& \frac{\sigma_w^2}{N_l} \sum_{x,x' \in B} \sum_{j=1}^{N_l} \mathbb E_{\theta} \left[\left(\phi'(\tilde h^l_{x; j}) \right)^2 \left(\frac{P_{xx'}}{\sqrt{\sum_{x \in B} \left(\sum_{x'' \in B} P_{x,x''} h^l_{x''; j}\right)^2}} \right. \right. \nonumber \\
    &\left. \left.- \frac{\sum_{x'' \in B} P_{xx''} h^l_{x''; j} \sum_{x'' \in B} P_{x'x''} h^l_{x''; j}}{\left( \sqrt{\sum_{x \in B} \left(\sum_{x'' \in B} P_{x,x''} h^l_{x''; j}\right)^2 } \right)^3} \right)^2 \right] + \mu^2 \,.
\end{align}

In the infinite $|B|$ limit, only one term can contribute:
\begin{align}
    \mathcal J^{l, l+1} =& \frac{\sigma_w^2}{N_l} \mathbb E_{\theta} \left[\sum_{x,x' \in B} \sum_{j=1}^{N_l} \left(\phi'(\tilde h^l_{x; j}) \right)^2 \frac{P_{xx'} P_{xx'}}{\sum_{x=1}^B \left(\sum_{x'' \in B} P_{x x''} h^l_{x''; j}\right)^2}\right]  + \mu^2 + O\left(\frac{1}{|B|}\right) \nonumber \\
    =& \frac{\sigma_w^2}{N_l} \mathbb E_{\theta} \left[ \sum_{j=1}^{N_l} \sum_{x \in B} \left(\phi'(\tilde h^l_{x; j}) \right)^2 \frac{P_{xx}}{\sum_{x \in B} \left(\sum_{x'' \in B} P_{x x''} h^l_{x''; j}\right)^2} \right] + \mu^2 + O\left(\frac{1}{|B|}\right) \nonumber \\
    =& \frac{\sigma_w^2}{N_l} \mathbb E_{\theta} \left[\sum_{j=1}^{N_l} \left( \left[\frac{1}{|B|} \sum_{x \in B} \left(\phi'(\tilde h^l_{x; j}) \right)^2 \right] \frac{|B|-1}{\sum_{x \in B} \left(\sum_{x'' \in B} P_{x x''} h^l_{x''; j}\right)^2} \right) \right] + \mu^2 + O\left(\frac{1}{|B|}\right) \nonumber \\
    \xrightarrow{B \rightarrow \infty} & \frac{\sigma_w^2}{N_l} \sum_{j=1}^{N_l} \mathbb E_{\tilde h^l_{x;j} \sim \mathcal N(0,  \delta_{xx'} )} \left[\phi'(\tilde h^l_{x;j}) \phi'(\tilde h^l_{x;j}) \right] \frac{1}{\mathcal K^l_{xx} - \mathcal K^l_{xx'}} + \mu^2 \,,
\end{align}
where $x'$ is a dummy index, just to label the off-diagonal term. We used \cref{conjecture:proj} and \cref{conjecture:tilde_h} to get the result.

\begin{conjecture}[Projected Norm]\label{conjecture:proj}
    In the infinite width limit. For a large depth $l$, $\frac{1}{|B|} \sum_{\hat{x} \in B} \left(\sum_{x' \in B} P_{\hat{x}x'} h^l_{x';j}\right)^2$ converges to a deterministic value $\frac{|B|-1}{|B|} \left(\mathcal K^l_{x'x'} - \mathcal K^l_{x' x''} \right)$ as batch size $|B| \rightarrow \infty$.
\end{conjecture}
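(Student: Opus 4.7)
My plan is to reduce the claim to showing that a quadratic form in a Gaussian vector concentrates on its mean. First I would expand the square and use the idempotency of $P$, namely $\sum_{\hat x \in B} P_{\hat x x'} P_{\hat x x''} = P_{x' x''}$, to rewrite the quantity of interest as
\begin{equation*}
Z \;\equiv\; \frac{1}{|B|} \sum_{x', x'' \in B} P_{x' x''}\, h^l_{x';j}\, h^l_{x'';j} \;=\; \frac{1}{|B|}\, \mathbf{h}^\top P\, \mathbf{h},
\end{equation*}
where $\mathbf{h} = (h^l_{x;j})_{x \in B}$ and $P = I - \tfrac{1}{|B|} \mathbf{1}\mathbf{1}^\top$ is the projector orthogonal to the all-ones vector. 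In the infinite-width limit, the mean-field treatment of Pre-BN preactivations (as in \citet{yang2018mean}) makes $\mathbf{h}$ a zero-mean Gaussian with covariance $\mathcal{K}^l$ across the batch. Under the standing large-depth assumption that the NNGP recursion has driven the kernel to a fixed-point structure with common diagonal $K^d := \mathcal{K}^l_{xx}$ and common off-diagonal $K^o := \mathcal{K}^l_{xx'}$ for $x \neq x'$, the target $\tfrac{|B|-1}{|B|}(K^d - K^o)$ is exactly what I aim to show is the limit of $Z$.

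Second, I would compute the mean and variance of this quadratic form. The mean is
\begin{equation*}
\mathbb{E}[Z] \;=\; \tfrac{1}{|B|} \operatorname{Tr}(P \mathcal{K}^l) \;=\; \tfrac{1}{|B|}\sum_{x} \mathcal{K}^l_{xx} \;-\; \tfrac{1}{|B|^2}\sum_{x,x'} \mathcal{K}^l_{xx'},
\end{equation*}
which under the fixed-point structure simplifies by direct calculation to $\tfrac{|B|-1}{|B|}(K^d - K^o)$, matching the claim. For the variance, Isserlis's theorem gives $\operatorname{Var}(\mathbf{h}^\top M \mathbf{h}) = 2\operatorname{Tr}((M \mathcal{K}^l)^2)$ for symmetric $M$. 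Taking $M = P/|B|$ and using $P\mathcal{K}^l = (K^d - K^o) P$ (since $P$ annihilates $\mathbf{1}$ and $\mathcal{K}^l - K^o \mathbf{1}\mathbf{1}^\top = (K^d - K^o) I$) collapses this to $\operatorname{Var}(Z) = \tfrac{2(K^d - K^o)^2(|B|-1)}{|B|^2} = O(1/|B|)$. Chebyshev's inequality then delivers convergence in probability of $Z$ to $\tfrac{|B|-1}{|B|}(K^d - K^o)$ as $|B| \to \infty$, which is the conjecture.

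The main obstacle is making the ``large depth'' hypothesis precise. The exact constant-diagonal / constant-off-diagonal form of $\mathcal{K}^l$ invoked above is only reached asymptotically under the BN-augmented NNGP recursion, and a fully rigorous proof would need quantitative bounds on how fast the kernel equilibrates (so that corrections to both $\mathbb{E}[Z]$ and $\operatorname{Var}(Z)$ remain subleading in $|B|$), together with control of finite-width deviations from Gaussianity induced by the BN nonlinearity. Both ingredients should follow the signal-propagation framework of \citet{yang2018mean}, but the ordering of the width, depth, and batch-size limits needs to be handled carefully.
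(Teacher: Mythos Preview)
Your proposal is correct and follows the same core idea as the paper: expand the square using the projector identity to obtain $\frac{1}{|B|}\sum_{x',x''}P_{x'x''}h^l_{x';j}h^l_{x'';j}$, then argue that this sample quadratic form concentrates on its expectation $\frac{|B|-1}{|B|}(\mathcal K^l_{x'x'}-\mathcal K^l_{x'x''})$. The paper's non-rigorous ``proof'' simply replaces the batch averages $\frac{1}{|B|}\sum_{x'}(h^l_{x';j})^2$ and $\frac{1}{|B|(|B|-1)}\sum_{x'\neq x''}h^l_{x';j}h^l_{x'';j}$ by their population expectations via a law-of-large-numbers heuristic and cites \citet{yang2018mean} for the rigorous version. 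Your route is strictly sharper: by writing $Z=\frac{1}{|B|}\mathbf h^\top P\,\mathbf h$ and using the exchangeable structure $\mathcal K^l=(K^d-K^o)I+K^o\mathbf 1\mathbf 1^\top$ together with $P\mathbf 1=0$, you get both the exact mean and the explicit variance $\operatorname{Var}(Z)=\frac{2(K^d-K^o)^2(|B|-1)}{|B|^2}$ via Isserlis, which upgrades the heuristic to a bona fide Chebyshev argument. The caveats you flag about the ordering of limits and the validity of Gaussianity under BatchNorm are exactly the ones the paper defers to \citet{yang2018mean}.
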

\begin{proof}[Non-regirous "proof"]
    In the infinite width limit $h^l_{x; j}$ is sampled from a Gaussian distribution $\mathcal N(0, \mathcal K^l_{xx'})$, where the value $\mathcal K_{xx'}$ only depends on if $x$ is the same as $x'$ or not. 

    We first simplify the formula:
    \begin{align}\label{eq:proj}
        &\frac{1}{|B|} \sum_{\hat{x} \in B} \left(\sum_{x' \in B} P_{\hat{x}x'} h^l_{x';j}\right)^2  \nonumber \\
        =& \frac{1}{|B|} \sum_{x', x'' \in B} P_{x' x''} h^l_{x'; j} h^l_{x''; j} \nonumber \\
        =& \frac{1}{|B|} \left(\sum_{x' \in B} (h^l_{x'; j})^2 - \frac{1}{|B|} \sum_{x', x'' \in B} h^l_{x';j} h^l_{x'';j} \right) \nonumber \\
        =& \frac{1}{|B|} \left(\frac{|B|-1}{|B|} \sum_{x' \in B} (h^l_{x'; j})^2 - \frac{1}{|B|} \sum_{x' \neq x''}^B h^l_{x';j} h^l_{x'';j} \right) \,.
    \end{align}    
    The average over $x'$ and $x''$ in infinite $|B|$ limit can be replaced by integration over their distribution (this is the non-rigorous step, complete rigorous proof see \citet{yang2018mean}):
    \begin{align}
        &\frac{1}{|B|} \sum_{\hat{x} \in B} \left(\sum_{x' \in B} P_{\hat{x}x'} h^l_{x';j}\right)^2 \nonumber \\
        \xrightarrow{|B| \rightarrow \infty} & \frac{|B|-1}{|B|}  \left(\mathbb E_{h^l_{x';j} \sim \mathcal N(0, \mathcal K^l_{xx'}) } \left[(h^l_{x'; j})^2 \right] - \mathbb E_{h^l_{x';j} \sim \mathcal N(0, \mathcal K^l_{x'x''}) } \left[h^l_{x';j} h^l_{x'';j} \right] \right) \nonumber \\
        = & \frac{|B|-1}{|B|} \left(\mathcal K^l_{x'x'} - \mathcal K^l_{x' x''} \right) \,,
    \end{align}
\end{proof}

Next, we need to show how to calculate $\mathcal K^{l}_{xx'}$. Before that we first try to simplify find the distribution of $\tilde h^l_{x; i}$ in the infinite $|B|$ limit. 

\begin{conjecture}[$\tilde h^l_{x;i}$ distribution]\label{conjecture:tilde_h}
    In the infinite $|B|$ limit and the infinite width limit, assume for large depth $\mathcal K^l_{xx}$ reaches a fixed point. Then $\tilde h^l_{x;i}$ can be seen as sampled from a Gaussian distribution with the covariance matrix
    \begin{align}
        \lim_{|B| \rightarrow \infty} \mathbb E_{\theta} \left[\tilde h^l_{x;i} \tilde h^l_{y;j}\right] = & \mathbb E_{\theta} \left[ \frac{\sum_{x', x'' \in B} P_{x x'} P_{y x''} h^l_{x';i} h^l_{x'';j}}{\frac{|B|-1}{|B|} \left(\mathcal K^l_{xx} - \mathcal K^l_{x \hat{x}}\right)} \right] \nonumber \\
        =& \frac{\sum_{x', x'' \in B} P_{xx'} P_{yx''} \mathcal K^l_{xx'}}{\frac{|B|-1}{|B|} \left(\mathcal K^l_{xx} - \mathcal K^l_{x \hat{x}}\right)} \delta_{ij} \nonumber \\
        =& \frac{\mathcal K^l_{xy} - \frac{1}{|B|} \mathcal K^l_{\hat{x}\hat{x}} - \frac{|B|-1}{|B|} \mathcal K^l_{x\hat{x}}}{\frac{|B|-1}{|B|} \left(\mathcal K^l_{xx} - \mathcal K^l_{x \hat{x}}\right)} \delta_{ij} \nonumber \\
        =&\delta_{xy} \delta_{ij} \,\,
    \end{align}
    where we used \cref{conjecture:proj} in the first line.
\end{conjecture}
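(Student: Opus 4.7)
The plan is to substitute the projector form $\tilde h^l_{x;i} = \sqrt{|B|}\sum_{x'}P_{xx'}h^l_{x';i}/\sqrt{\sum_{\hat x}(\sum_{x'}P_{\hat x x'}h^l_{x';i})^2}$ from \eqref{eq:BN} into $\mathbb E_\theta[\tilde h^l_{x;i}\tilde h^l_{y;j}]$, and reduce the resulting ratio to its claimed limit. In the infinite width limit the preactivations $h^l_{x;i}$ are jointly Gaussian with $\mathbb E_\theta[h^l_{x;i}h^l_{y;j}] = \mathcal K^l_{xy}\delta_{ij}$; the final factor $\delta_{ij}$ will come from this neuron-index independence. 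The numerator of the two-point function is the bilinear form $\sum_{x',x''\in B}P_{xx'}P_{yx''}h^l_{x';i}h^l_{x'';j}$, and its Gaussian expectation collapses to $\delta_{ij}\sum_{x',x''}P_{xx'}P_{yx''}\mathcal K^l_{x'x''}$.

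Next I would invoke \cref{conjecture:proj}: in the $|B|\to\infty$ limit the denominator $\tfrac{1}{|B|}\sum_{\hat x}(\sum_{x'}P_{\hat x x'}h^l_{x';i})^2$ concentrates at the deterministic value $\tfrac{|B|-1}{|B|}(\mathcal K^l_{xx}-\mathcal K^l_{x\hat x})$. Because the denominator ceases to fluctuate at leading order, Slutsky-type reasoning lets me pull it out of $\mathbb E_\theta[\cdot]$ and replace it by its deterministic limit, at the cost of a standard uniform-integrability assumption. What remains is the ratio of the expectation above to this deterministic factor.

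Then I would carry out the projector algebra. Using $\sum_{x'}P_{xx'}f(x') = f(x) - \tfrac{1}{|B|}\sum_{x'}f(x')$ together with the depth-fixed-point assumption that $\mathcal K^l_{x'x''}$ takes only two values (diagonal $\mathcal K^l_{xx}$ and a common off-diagonal $\mathcal K^l_{x\hat x}$), the double sum in $x',x''$ collapses to $\mathcal K^l_{xy}-\tfrac{1}{|B|}\mathcal K^l_{\hat x \hat x}-\tfrac{|B|-1}{|B|}\mathcal K^l_{x\hat x}$, matching the intermediate line in the statement. Dividing by the deterministic normalizer and sending $|B|\to\infty$ yields $(\mathcal K^l_{xy}-\mathcal K^l_{x\hat x})/(\mathcal K^l_{xx}-\mathcal K^l_{x\hat x})$, which is $1$ for $x=y$ and $0$ for $x\neq y$ by the same fixed-point symmetry. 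Multiplying by $\delta_{ij}$ gives $\delta_{xy}\delta_{ij}$. Gaussianity of $\tilde h^l_{x;i}$ itself follows because, once the normalizer is treated as a deterministic scalar, $\tilde h^l_{x;i}$ is a linear image of a Gaussian vector, and joint Gaussianity of the $\{\tilde h^l_{x;i}\}$ then forces the joint distribution to be determined by this two-point function.

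The main obstacle is precisely the step that pulls the random normalizer outside the expectation. The object $\tilde h^l_{x;i}$ is a ratio in which the numerator and denominator are built from the same Gaussians and are therefore correlated, and the denominator is only asymptotically deterministic. A rigorous justification requires both the concentration from \cref{conjecture:proj} and a uniform-integrability control of the kind carried out in \citet{yang2018mean}; in the present write-up this is explicitly left at the non-rigorous level, which is consistent with labeling the claim a conjecture. A secondary technical point is verifying the two-valued structure of $\mathcal K^l$ at the relevant depth, but this is inherited as an assumption from the statement and from the general NNGP fixed-point picture used throughout the paper.
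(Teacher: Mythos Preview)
Your proposal is correct and follows essentially the same route as the paper: the paper's own justification is nothing more than the chain of equalities embedded in the conjecture statement, which amounts to (i) using \cref{conjecture:proj} to replace the random normalizer by its deterministic limit, (ii) taking the Gaussian expectation of the numerator to get $\delta_{ij}\sum_{x',x''}P_{xx'}P_{yx''}\mathcal K^l_{x'x''}$, and (iii) collapsing the projector sums under the two-valued $\mathcal K^l$ assumption. Your added discussion of why the factorization of the ratio is only heuristic (Slutsky/uniform-integrability caveat) and your observation that the off-diagonal piece is $-1/(|B|-1)$ before the limit are both accurate elaborations that the paper leaves implicit.
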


For ReLU:
\begin{align}
    \mathcal K^{l+1}_{xx'} = \begin{cases}
        \frac{\sigma_w^2}{2} + \sigma_b^2 + \mu^2 \mathcal K^l_{xx} & \text{if $x=x'$} \\
        \frac{\sigma_w^2}{2\pi} + \sigma_b^2 + \mu^2 \mathcal K^l_{xx'} & \text{if $x \neq x'$} \,.
    \end{cases}
\end{align}
Then for $\mu=0$ APJN is independent of $\sigma_w^2$ and $\sigma_b^2$ in infinite $|B|$ limit:
\begin{align}
    \mathcal J^{l, l+1} = \frac{\pi}{\pi - 1} \,,
\end{align}
and for $\mu=1$:
\begin{align}
    \mathcal J^{l, l+1} = 1 + O \left(\frac{1}{l} \right) \,.
\end{align}
It is also intuitively clear by realizing the denominator of \eqref{eqapp:bn_apjn} is growing with $l$ when $\mu=1$. Thus the finite $|B|$ corrections are further suppressed. We checked our results in \Cref{fig:bn_relu}.

\section{JSL and JKL}
\subsection{JSL}
Since we already discussed our results for JL, we show details for JSL in this section. Derivation for JL is almost identical.

Using JSL, The SGD update of $a^l_{\theta}$ at time t is
\begin{align}\label{eqapp:a_update_jsl}
    a_{\theta}^{l+1}(t+1) - a_{\theta}^{l+1}(t) = - \eta \sum_{l' \geq l}^L  \frac{\partial \mathcal J^{l', l'+1}(t)}{\partial a_{\theta}^{l+1}(t)} \left(\mathcal J^{l', l'+1}(t) - 1 \right) \,.
\end{align}
We focus on ReLU networks to demonstrate the difference between JL and JSL. 

For ReLU networks, we can rewrite \eqref{eqapp:a_update_jsl} as
\begin{align}\label{eqapp:j_update_jsl}
    \sqrt{\mathcal J^{l,l+1}(t+1)} - \sqrt{\mathcal J^{l,l+1}(t)} = -\eta \sigma_w^2 \sqrt{\mathcal J^{l,l+1}(t)} \left(\mathcal J^{l,l+1}(t) - 1\right) \,.
\end{align}

\paragraph{Learning Rate $\eta$} The learning rate limit $\eta_t$ is obtained by requiring $|\sqrt{\mathcal J^{l,l+1}(t)} - 1|$ monotonically decrease with time $t$, for any $l$, then we have
\begin{align}\label{eqapp:eta_t_jsl}
    \eta_t < \min_{1 \leq l \leq L} \left\{\frac{2}{\sigma_w^2 \sqrt{\mathcal J^{l,l+1}(t)} \left(1 + \sqrt{\mathcal J^{l, l+1}(t)}\right)} \right \} \,.
\end{align}

Or by solving \eqref{eqapp:j_update_jsl} with $J^{l,l+1}(1)=1$:
\begin{align}
    \eta_{\mathrm{1-step}} = \frac{1}{\sigma_w^2 \sqrt{\mathcal J^{l,l+1}(0)} \left(1 + \sqrt{\mathcal J^{l, l+1}(0)}\right)}  \,.
\end{align}

For the dynamics of the optimization while using a single learning rate $\eta$. We again estimate the allowed maximum learning rate $\eta_0$ at $t=0$ using $\mathcal J^{l, l+1}(0) = (a_W^{l+1} \sigma_w)^2 / 2$:
\begin{align}
    \eta_0 = \frac{4}{\sigma_w^3 a_W^l \left(\sqrt{2} + a_W^l \sigma_w \right)} \,.
\end{align}

Compared to \eqref{eq:eta_0_jl}, which scales as $1/\log \sigma_w$ for large $\sigma_w$, $\eta_0$ for JSL scales as $\sigma_w^{-4}$ for large $\sigma_w$. This makes the JSL a way worse choice than JLE when $\mathcal J^{l,l+1} \gg 1$.

In \Cref{figapp:relu_jac}, we checked our results with \cref{alg:j_train} using JSL. All other details are the same as \Cref{fig:relu_jac}. The gap between $\eta_0$ and trainable regions can again be explained similarly by analyzing \eqref{eqapp:eta_t_jsl}. Assuming at time $t$: $|\mathcal J^{l,l+1}(t) - 1| < |\mathcal J^{l,l+1}(0) - 1|$ holds. For $\mathcal J^{l,l+1} < 1$ if we use a learning rate $\eta$ that satisfies $\eta_0 > \eta > \eta_t$, there is still a chance that \Cref{alg:j_train} diverges for some $t>0$. For $\mathcal J^{l,l+1} > 1$ if $\eta_0 < \eta < \eta_t$ holds, \Cref{alg:j_train} may say still have a chance to converge for some $t>0$. 
\begin{figure}[h]
    \centering
    \includegraphics[width=0.75\textwidth]{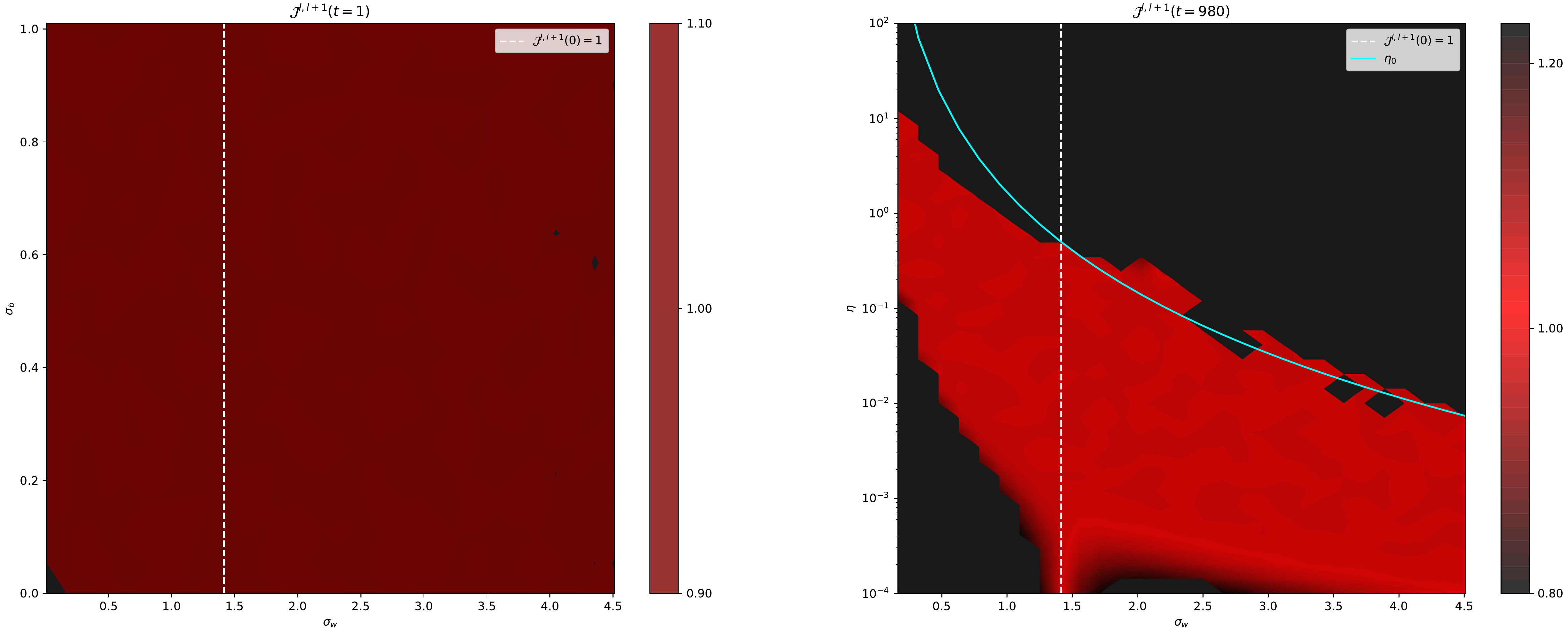}
    \caption{$\mathcal J^{l, l+1}(t)$ plot for $L=10$, $N_l=500$ ReLU MLP networks, initialized with $a_W^l=1$. From left to right: 1) $\mathcal{J}^{l, l+1}(t=1)$ values are obtained by tuning with \Cref{alg:j_train} using $\eta_{\mathrm{1-step}}$ with JSL; 2) we scan in $\eta$-$\sigma_w$ plane using $\sigma_b=0$ networks, tune $\mathcal J^{l, l+1}(t)$ using \Cref{alg:j_train} with JSL for $980$ steps. Only $0.8< \mathcal J^{l,l+1} <1.25$ points are plotted;  All networks are trained with normalized CIFAR-10 dataset, $|B|=256$.}
    \label{figapp:relu_jac}
\end{figure}
\subsection{JKL}
We mentioned the following loss function in the main text.
\begin{align}\label{eqapp:jkle}
    \mathcal L_{\mathcal J \mathcal K\log} = \frac{1}{2} \sum_{l=1}^{L} \left[\log(\mathcal J^{l, l+1})\right]^2 + \frac{\lambda}{2} \sum_{l=1}^{L} \left[\log\left (\frac{\mathcal K^{l+1}(x, x)}{\mathcal K^l(x,x)} \right)\right]^2 \,.
\end{align}
There are other possible choices for controlling the forward pass, we will discuss this one briefly.

First we calculate the derivative from kernel terms. We omit $x$ and $t$ dependency and introduce $r^{l+1, l} = \mathcal K^{l+1} / \mathcal K^l$ for clarity:
\begin{align}
    &\frac{\partial}{\partial a_W^{l+1}}\left( \frac{\lambda}{2} \sum_{l'\geq l}^{L} \left[\log{r^{l'+1,l'}}\right]^2 \right) \nonumber \\
    =& \frac{2\lambda}{a_W^{l+1}} \log r^{l+1,l} + \frac{\partial}{\partial a_W^{l+1}}\left( \frac{\lambda}{2} \sum_{l' > l}^{L} \left[\log{r^{l'+1,l'}}\right]^2 \right) \,,
\end{align}
which has a similar structure as APJN terms. Next we pick a term with $l' > l$ in the parentheses:
\begin{align}
    &\frac{\partial}{\partial a_W^{l+1}}\left( \frac{\lambda}{2} \left[\log{r^{l'+1,l'}}\right]^2 \right) \nonumber \\
    =& \frac{\lambda}{\mathcal K^{l'+1} \mathcal K^{l'}} \left(\mathcal K^{l'} \frac{\partial \mathcal K^{l'+1}}{\partial a_W^{l+1}} - \mathcal K^{l'+1} \frac{\partial \mathcal K^{l'}}{\partial a_W^{l+1}}\right)\log{r^{l'+1,l'}} \,,
\end{align}
which is independent of depth for $\sigma_b=0$, and is always finite for $\sigma_b$.

We find that update from the forward pass term for $a_b^{l+1}$ is subtle. For $\sigma_b=0$, similar to the discussion of APJN terms, the update of $a_b^{l+1}$ is zero. For $\sigma_b > 0$, there are two possibilities:
\begin{itemize}
    \item Unbounded activation functions, when $\chi_{\mathcal K}^l > 1$: $\mathcal K^l \rightarrow \infty$ as $l\rightarrow \infty$, thus updates of $a_b^{l+1}$ from the forward pass term vanishes.
    \item Bounded activation functions or unbounded activation functions with $\chi_{\mathcal K}^l < 1$: $\mathcal K^l \rightarrow \mathcal K^{\star}$, thus the contribution from the forward pass term is always $O(1)$.
\end{itemize}
Summarizing the discussion above, we do not have exploding and vanishing gradients problem originated from the term we introduced to tune the forward pass. The forward pass term simply speeds the update of $a_W^{l+1}$ and $a_b^{l+1}$ in most cases.

\paragraph{ReLU} Again we use a ReLU MLP network as an example. 

For $\sigma_b=0$, $\mathcal L_{\mathcal J \mathcal K \log}$ is equivalent to $(1+\lambda)\mathcal L_{\log}$ due to the scale invariance property of the ReLU activation function, which can be checked by using $\mathcal K^{l+1}(x,x) = \sigma_w^2 \mathcal K^l(x,x) / 2$.

For finite $\sigma_b$, we use $\mathcal K^{l,l+1}(x,x) = \mathcal J^{l,l+1} \mathcal K^l(x,x) + \sigma_b^2$:
\begin{align}\label{eqapp:relu_jkl}
    \mathcal L_{\mathcal J \mathcal K \log} = \frac{1}{2} \sum_{l=1}^{L} \left[\log(\mathcal J^{l, l+1})\right]^2 + \frac{\lambda}{2} \sum_{l=1}^{L} \left[\log\left (\mathcal J^{l,l+1} + \frac{\sigma_b^2}{\mathcal K^l(x,x)} \right)\right]^2 \,.
\end{align}

In ordered phase $\mathcal J^{l,l+1} = (a_W^{l+1}(t) \sigma_w)^2 / 2 < 1$, one can prove that $\mathcal K^l(x,x) \rightarrow \sigma_b^2 /(1 - (a_W^{l+1}(t)\sigma_w)^2/2)$ as $l \rightarrow \infty$, thus \eqref{eqapp:relu_jkl} is equivalent to JL at large depth.

In chaotic phase $\mathcal J^{l,l+1} = (a_W^{l+1}(t) \sigma_w)^2 / 2 > 1$ and $\mathcal K^l(x,x) \rightarrow \infty$. \eqref{eqapp:relu_jkl} is equivalent to JL with an extra overall factor $1+\lambda$ at large depth.
\section{ResMLP}

\subsection{Network Recursion Relation}

\paragraph{Input} The input image is chopped into an $N \times N$ grid of patches of size $P\times P$ pixels (often $16 \times 16$). The patches are fed into (the same) Linear layer to form a set of $N^2$ $d$-dimensional embeddings, referred to as channels. The resulting input to the ResMLP blocks : $h^0_{\mu i} \in \mathbb R^{N^2} \otimes \mathbb R^{d}$. Here and in what follows, Greek letters ($\mu,\nu$ etc.) index the patches, while Latin letters ($i,j$ etc.) index the channels. Note that in practice, the above two operations are combined into a Convolutional layer with the filer size coinciding with the patch-resolution ($P \times P \times C$); and the stride equal to $P$ so as to avoid overlap between patches. Here, $C$ is the number of channels in the original image.

\paragraph{ResMLP block} The input embedding $h^0_{\mu i}$ is passed through a series of ($L$) self-similar ResMLP blocks, which output $h^L_{\mu i} \in \mathbb R^{N^2} \otimes \mathbb R^{d}$. In the following, we use the notation $1^{2,3}_{4}$ for the parameters; where 1 denotes the parameter, 2 denotes the block-index, 3 denotes the specific action within the block, and 4 denotes the neural indices. A ResMLP block consists of the following operations.
\begin{align}\label{appeq:resmlp_rec_full}
    &\texttt{AffineNorm1:} & a^{l+1}_{\mu i} &= \left( \alpha^{{l+1},a}_i h^{l+1}_{\nu i} + \beta^{{l+1},a}_i \right) \,, & &\left( \mathbb R^{N^2} \otimes \mathbb R^d \right) \\
    &\texttt{linear1:} & b^{l+1}_{\mu i} &= \sum^{N^2}_{\nu=1} W^{{l+1},b}_{\mu\nu} a^{l+1}_{\nu i} + B^{{l+1},b}_{\mu} \,, & &\left( \mathbb R^{N^2} \otimes \mathbb R^d \right) \\
    &\texttt{residual1:} & c^{l+1}_{\mu i} &= \mathcal E^{{l+1},c}_i b^{l+1}_{\mu i} + \mu_1 a^{l+1}_{\mu i} \,, & &\left( \mathbb R^{N^2} \otimes \mathbb R^d \right) \\
    & \texttt{AffineNorm2:} & d^{l+1}_{\mu i} &= \left( \alpha^{{l+1},d}_i c^{l+1}_{\mu i} + \beta^{{l+1},d}_i \right) \,, &&\left( \mathbb R^{N^2} \otimes \mathbb R^d \right) \\
    &\texttt{linear2:} & e^{l+1}_{\mu i} &= \sum^d_{j=1} W^{{l+1},e}_{ij} d^{l+1}_{\mu j} + B^{{l+1},e}_{i} \,, & &\left( \mathbb R^{N^2} \otimes \mathbb R^{4d} \right) \\
    &\texttt{activation:} & f^{l+1}_{\mu i} &= \phi\left(e^{l+1}_{\mu j} \right) \,, & &\left( \mathbb R^{N^2} \otimes \mathbb R^{4d} \right) \\
    &\texttt{linear3:} & g^{l+1}_{\mu i} &= \sum^{4d}_{j=1} W^{{l+1},g}_{ij} f^{l+1}_{\mu j} + B^{{l+1},g}_{i}\,, & &\left( \mathbb R^{N^2} \otimes \mathbb R^d \right) \\
    &\texttt{residual2:} & h^{l+1}_{\mu i} &= \mathcal E^{{l+1},h}_i g^{l+1}_{\mu i} + \mu_2 c^{l+1}_{\mu i} \,, & &\left( \mathbb R^{N^2} \otimes \mathbb R^d \right)
\end{align}
where the brackets on the right contain the dimensions of the output of the layers.

We consider linear layers with weights and biases initialized with standard fan\_in. \texttt{linear1} acts on the patches, with parameters initialized as $W^{l+1,a}_{\mu\nu} \sim \mathcal N(0, \sigma_w^2/N)\,; B^{l+1,a}_{\mu} \sim \mathcal N(0, \sigma_b^2)$. \texttt{linear2} acts on the channels, with parameter initialized as $W^{l+1,e}_{ij} \sim \mathcal N(0, \sigma_w^2/\sqrt{d})\,; B^{l+1,e}_i \sim \mathcal N(0, \sigma_b^2)$. \texttt{linear3} also acts on the channels, with parameters initialized as $W^{l+1,g}_{ij} \sim \mathcal N(0, \sigma_w^2/\sqrt{4d})\,; B^{l+1,g}_i \sim \mathcal N(0, \sigma_b^2)$.
GELU is used as the activation function $\phi$.

\texttt{AffineNrom1} and \texttt{AffineNrom2} perform an element-wise multiplication with a trainable vector of weights $\alpha^{l+1,a}_i, \alpha^{l+1,d}_i  \in \mathbb R^d$ and an addition of a trainable bias vector $\beta^{l+1,a}_i, \beta^{l+1,d}_i \in \mathbb R^d$.
Residual branches are scaled by a trainable vector $\mathcal E^{l+1,c}_i, \mathcal E^{l+1,h}_i \in \mathbb R^{d}$ (\texttt{LayerScale}), whereas the skip connections are scaled by scalar strengths $\mu_1$ and $\mu_2$.

\paragraph{Output} The action of blocks is followed by an Average-Pooling layer, to to convert the output to a $d$-dimensional vector. This vector is fed into a linear classifier that gives the output of the network $h^{L+1}_i$.

\subsection{NNGP Kernel Recursion Relation}

At initialization, $\alpha^{l+1,a}_i = \alpha^{l+1,d}_i = \mathbf 1_d$ and $\beta^{l+1,a}_i = \beta^{l+1,d}_i = \mathbf 0_d$. Thus, AffineNorm layers perform identity operations at initialization.
\texttt{LayerScale} is initialized as $\mathcal E^{l+1,c}_i = \mathcal E^{l+1,h}_i = \mathcal E \, \mathbf 1_d$, where $\mathcal E$ is chosen to be a small scalar. (For examlpe, $\mathcal E$ is taken to be 0.1 for 12-block ResMLP and $10^{-5}$ for a 24-block ResMLP network.) Additionally, we also take $\mu_1 = \mu_2 = \mu$.

With these simplifications, we can obtain the recursion relation for the diagonal part of the Neural Network Gaussian Process (NNGP) kernel for the ResMLP block-outputs. We note that the the full NNGP kernel $\mathcal K^l_{\mu\nu;ij}$ is a tensor in $\mathbb R^{N^2} \otimes \mathbb R^{N^2} \otimes \mathbb R^{d} \times \mathbb R^{d}$. Here, we focus on its diagonal part $\mathcal K^l_{\mu\mu;ii}$. For clarity, we remove the subscripts ($\mu\mu;ii$). The diagonal part of the NNGP kernel for a block output $h^l_{\mu i}$ is defined as
\begin{align}\label{appeq:remlp_nngpk}
    \mathcal K^l \equiv \mathbb E_\theta \left[ h^l_{\mu i} h^l_{\mu i} \right] \,,
\end{align}
which is independent of its patch and channel indices, $\mu$ and $i$, in the infinite width limit. The recursion relation can be obtained by propagating the NNGP through a block. For clarity, we define NNGP kernel for the intermediate outputs within the blocks. For example, $\mathcal K^{l+1}_a \equiv \mathbb{E}_{\theta} \left[ a^{l+1}_{\mu i} a^{l+1}_{\mu i} \right]$, $\mathcal K^{l+1}_b \equiv \mathbb{E}_{\theta} \left[ b^{l+1}_{\mu i} b^{l+1}_{\mu i} \right]$, etc.
\begin{align}\label{appeq:resmlp_nngpk_rec}
    &\texttt{AffineNorm1:} & \mathcal K^{l+1}_a &= \mathcal K^l \,, \\
    \nonumber
    &\texttt{linear1:} & \mathcal K^{l+1}_b &= \sigma_w^2 \mathcal K^{l+1}_a + \sigma_b^2 \\
    &&&= \sigma_w^2 \mathcal K^l + \sigma_b^2 \,, \\
    \nonumber
    &\texttt{residual1:} & \mathcal K^{l+1}_c &= \mathcal E^2 \mathcal K^{l+1}_b + \mu^2 \mathcal K^l \\
    \nonumber
    &&&= \mu^2 \mathcal K^l + \mathcal E^2 \left( \sigma_w^2 \mathcal K^l + \sigma_b^2 \right) \\
    &&&= \left( \mu^2 + \mathcal E^2 \sigma_w^2 \right) \mathcal K^l + \mathcal E^2 \sigma_b^2 \,, \\
    \nonumber
    &\texttt{AffineNorm2:} & \mathcal K^{l+1}_d &= \mathcal K^{l+1}_c \\
    &&&= \mu^2 \mathcal K^l + \mathcal E^2 \left( \sigma_w^2 \mathcal K^l + \sigma_b^2 \right) \,, \\
    \nonumber
    &\texttt{linear2:} & \mathcal K^{l+1}_e &= \sigma_w^2 \mathcal K^{l+1}_d + \sigma_b^2 \\
    &&&= \sigma_w^2 \left( \mu^2 \mathcal K^l + \mathcal E^2 \left( \sigma_w^2 \mathcal K^l + \sigma_b^2 \right) \right) + \sigma_b^2 \,, \\
    \nonumber
    &\texttt{activation:} & \mathcal K^{l+1}_f &= \frac{\mathcal K^{l+1}_e}{4} + \frac{\mathcal K^{l+1}_e}{2\pi} \arcsin{\left( \frac{\mathcal K^{l+1}_e}{1 + \mathcal K^{l+1}_e} \right)} + \frac{\left( \mathcal K^{l+1}_e \right)^2}{\pi \left( 1 + \mathcal K^{l+1}_e \right) \sqrt{1 + 2\mathcal K^{l+1}_e}} \\
    \nonumber
    &&&\equiv \mathcal G \left[ \mathcal K^{l+1}_e \right] \\
    &&&= \mathcal G \left[ \sigma_w^2 \left( \mu^2 \mathcal K^l + \mathcal E^2 \left( \sigma_w^2 \mathcal K^l + \sigma_b^2 \right) \right) + \sigma_b^2 \right] \,, \\
    \nonumber
    &\texttt{linear3:} & \mathcal K^{l+1}_g &= \sigma_w^2 \mathcal K^{l+1}_f + \sigma_b^2 \\
    &&&= \sigma_w^2 \, \mathcal G \left[ \sigma_w^2 \left( \mu^2 \mathcal K^l + \mathcal E^2 \left( \sigma_w^2 \mathcal K^l + \sigma_b^2 \right) \right) + \sigma_b^2 \right] + \sigma_b^2 \,, \\
    \nonumber
    &\texttt{residual2:} & \mathcal K^{l+1} &= \mathcal E^2 \mathcal K^{l+1}_g + \mu^2 \mathcal K^{l+1}_c \\
    \nonumber
    &&&= \mathcal \mu^2 \left( \left( \mu^2 + \mathcal E^2 \sigma_w^2 \right) \mathcal K^l + \mathcal E^2 \sigma_b^2 \right) + \\
    \nonumber
    &&& \quad + \mathcal E^2 \, \left\{ \sigma_w^2 \, \mathcal G \left[ \sigma_w^2 \left( \mu^2 \mathcal K^l + \mathcal E^2 \left( \sigma_w^2 \mathcal K^l + \sigma_b^2 \right) \right) + \sigma_b^2 \right] + \sigma_b^2 \right\} \\
    \nonumber
    &&&= \left( \mu^4 + \mu^2 \mathcal E^2 \sigma_w^2 \right) \mathcal K^l + (1 + \mu^2) \mathcal E^2 \sigma_b^2 + \\
    &&&\quad + \mathcal E^2 \sigma_w^2 \, \mathcal  G \left[ \sigma_w^2 \left( \mu^2 \mathcal K^l + \mathcal E^2 \left( \sigma_w^2 \mathcal K^l + \sigma_b^2 \right) \right) + \sigma_b^2 \right] \,,
\end{align}
where we have defined 
\begin{equation}
    \mathcal G [z] =  \frac{z}{4} + \frac{z}{2\pi} \arcsin{\left( \frac{z}{1 + z} \right)} + \frac{\left( z \right)^2}{\pi \left( 1 + z \right) \sqrt{1 + 2z}} \,.
\end{equation}

Thus, we have a recursion relation, representing $\mathcal K^{l+1}$ in terms of $\mathcal K^l$.

As a side note, if we replace \texttt{GELU} activation function with \texttt{ReLU}, the relation simplifies greatly, offering us intuition. Specifically, $\mathcal G[z]$ gets replaced by $z/2$ in this case. This gives us the following recursion relation for ResMLP with \texttt{ReLU}.
\begin{align}
    \nonumber
    \mathcal K^{l+1} &= \left( \mu^4 + \mu^2 \mathcal E^2 \sigma_w^2 \right) \mathcal K^l + (1 + \mu^2) \mathcal E^2 \sigma_b^2 + \frac{1}{2} \mathcal E^2 \sigma_w^2 \left( \sigma_w^2 \left( \mu^2 \mathcal K^l + \mathcal E^2 \left( \sigma_w^2 \mathcal K^l + \sigma_b^2 \right) \right) + \sigma_b^2 \right) \\
    &= \left( \mu^4 + \mu^2 \mathcal E^2 \sigma_w^2 + \frac{1}{2} \mu^2 \mathcal E^2 \sigma_w^4 + \frac{1}{2} \mathcal E^4 \sigma_w^6 \right) \mathcal K^l + (1 + \mu^2 + \frac{1}{2}\sigma_w^2) \mathcal E^2 \sigma_b^2 + \frac{1}{2} \mathcal E^2 \sigma_w^2 \sigma_b^2
\end{align}

\subsection{Jacobian Recursion Relation}
Next, we calculate the APJN for ResMLP, between two consecutive blocks. For clarity, we first derive the expression for the partial derivative of ${l+1}^{th}$ block output $h^{l+1}_{\mu i}$ with respect to $l^{th}$ block output $h^l_{\nu j}$.
\begin{align}\label{appeq:resmlp_derivative}
    \nonumber
    \frac{\partial h^{l+1}_{\mu i}}{\partial h^l_{\nu j}} &= \mathcal E^{l+1,h}_i \frac{\partial g^{l+1}_{\mu i}}{\partial h^l_{\nu j}} + \mu_2 \frac{\partial c^{l+1}_{\mu i}}{\partial h^l_{\nu j}} \\
    \nonumber
    &= \mathcal E^{l+1,h}_i \sum_{k=1}^{4d} W^{l+1,g}_{ik} \frac{\partial f^{l+1}_{\mu k}}{\partial h^l_{\nu j}} + \mu_2 \frac{\partial c^{l+1}_{\mu i}}{\partial h^l_{\nu j}} \\
    \nonumber
    &= \mathcal E^{l+1,h}_i \sum_{k=1}^{4d} W^{l+1,g}_{ik} \phi'(e^{l+1}_{\mu k}) \frac{\partial e^{l+1}_{\mu k}}{\partial h^l_{\nu j}} + \mu_2 \frac{\partial c^{l+1}_{\mu i}}{\partial h^l_{\nu j}} \\
    \nonumber
    &= \mathcal E^{l+1,h}_i \sum_{k=1}^{4d} \sum_{m=1}^d W^{l+1,g}_{ik} \phi'(e^{l+1}_{\mu k}) W^{l+1,e}_{km} \frac{\partial d^{l+1}_{\mu m}}{\partial h^l_{\nu j}} + \mu_2 \frac{\partial c^{l+1}_{\mu i}}{\partial h^l_{\nu j}} \\
    \nonumber
    &= \mathcal E^{l+1,h}_i \sum_{k=1}^{4d} \sum_{m=1}^d W^{l+1,g}_{ik} \phi'(e^{l+1}_{\mu k}) W^{l+1,e}_{km} \frac{\partial c^{l+1}_{\mu m}}{\partial h^l_{\nu j}} + \mu_2 \frac{\partial c^{l+1}_{\mu i}}{\partial h^l_{\nu j}} \\
    \nonumber
    &= \mathcal E^{l+1,h}_i \sum_{k=1}^{4d} \sum_{m=1}^d W^{l+1,g}_{ik} \phi'(e^{l+1}_{\mu k}) W^{l+1,e}_{km} \mathcal E^{l+1,c}_m \frac{\partial b^{l+1}_{\mu m}}{\partial h^l_{\nu j}} + \\
    \nonumber 
    &\quad + \mathcal E^{l+1,h}_i \sum_{k=1}^{4d} \sum_{m=1}^d W^{l+1,g}_{ik} \phi'(e^{l+1}_{\mu k}) W^{l+1,e}_{km} \mu_1 \frac{\partial a^{l+1}_{\mu m}}{\partial h^l_{\nu j}} + \mu_2 \mathcal E^{l+1,c}_i \frac{\partial b^{l+1}_{\mu i}}{\partial h^l_{\nu j}} + \mu_2\mu_1 \frac{\partial a^{l+1}_{\mu i}}{\partial h^l_{\nu j}} \\
    \nonumber
    &= \mathcal E^{l+1,h}_i \sum_{k=1}^{4d} \sum_{m=1}^d \sum_{\lambda=1}^{N^2} W^{l+1,g}_{ik} \phi'(e^{l+1}_{\mu k}) W^{l+1,e}_{km} \mathcal E^{l+1,c}_m W^{l+1,b}_{\mu\lambda} \frac{\partial a^{l+1}_{\lambda m}}{\partial h^l_{\nu j}} + \\
    \nonumber
    &\quad + \mathcal E^{l+1,h}_i \sum_{k=1}^{4d} \sum_{m=1}^d W^{l+1,g}_{ik} \phi'(e^{l+1}_{\mu k}) W^{l+1,e}_{km} \mu_1 \delta_{\mu\nu} \delta_{mj} + \\
    \nonumber
    &\quad + \mu_2 \mathcal E^{l+1,c}_i \sum_{\lambda=1}^{N^2} W^{l+1,b}_{\mu\lambda} \frac{\partial a^{l+1}_{\lambda i}}{\partial h^l_{\nu j}} + \mu_2\mu_1 \delta_{\mu\nu} \delta_{ij} \\
    \nonumber
    &= \mathcal E^{l+1,h}_i \mathcal E^{l+1,c}_j \sum_{k=1}^{4d} W^{l+1,g}_{ik} \phi'(e^{l+1}_{\mu k}) W^{l+1,e}_{kj} W^{l+1,b}_{\mu\nu} + \\
    \nonumber
    &\quad + \mu_1 \mathcal E^{l+1,h}_i \delta_{\mu\nu} \sum_{k=1}^{4d} W^{l+1,g}_{ik} \phi'(e^{l+1}_{\mu k}) W^{l+1,e}_{kj} + \mu_2 \mathcal E^{l+1,c}_i \delta_{ij} W^{l+1,b}_{\mu\nu} + \mu_2\mu_1 \delta_{\mu\nu} \delta_{ij} \\
    \nonumber
    &= \mathcal E^2 \sum_{k=1}^{4d} W^{l+1,g}_{ik} \phi'(e^{l+1}_{\mu k}) W^{l+1,e}_{kj} W^{l+1,b}_{\mu\nu} + \mu \mathcal E \delta_{\mu\nu} \sum_{k=1}^{4d} W^{l+1,g}_{ik} \phi'(e^{l+1}_{\mu k}) W^{l+1,e}_{kj} + \\
    &\quad + \mu\mathcal E \delta_{ij} W^{l+1,b}_{\mu\nu} + \mu^2 \delta_{\mu\nu} \delta_{ij} \,,
\end{align}
where in the last step, we have used the initial values of the parameters : $\mathcal E^{l+1,c}_i = \mathcal E^{l+1,h}_i = \mathcal E \, \mathbf 1_d$ and $\mu_1 = \mu_2 = \mu$.

Next, we calculate the APJN using \eqref{appeq:resmlp_derivative}. We will perform the calculation in the limit of large $N^2$ and $d$; dropping all the corrections of order $\frac{1}{N^2}$ and $\frac{1}{d}$.

\begin{align}\label{appeq:resmlp_apjn}
    \nonumber
    \mathcal J^{l,l+1} &= \frac{1}{N^2 d} \mathbb E_\theta \left[ \sum_{\mu,\nu}^{N^2} \sum_{i,j}^d \frac{\partial h^{l+1}_{\mu i}}{\partial h^l_{\nu j}} \frac{\partial h^{l+1}_{\mu i}}{\partial h^l_{\nu j}} \right] \\
    \nonumber
    &= \frac{1}{N^2 d} \mathbb E_\theta \left[ \mathcal E^4 \sum_{\mu,\nu}^{N^2} \sum_{i,j}^d \sum_{k,m}^{4d} W^{l+1,g}_{ik} W^{l+1,g}_{im} \phi'(e^{l+1}_{\mu k}) \phi'(e^{l+1}_{\mu m}) W^{l+1,e}_{kj} W^{l+1,e}_{mj} W^{l+1,b}_{\mu\nu} W^{l+1,b}_{\mu\nu} + \right. \\
    \nonumber
    &\qquad\qquad\quad \left. + \mu^2\mathcal E^2 \sum_{\mu,\nu}^{N^2} \sum_{i,j}^d \sum_{k,m}^{4d} \delta_{\mu\nu} W^{l+1,g}_{ik} W^{l+1,g}_{im} \phi'(e^{l+1}_{\mu k}) \phi'(e^{l+1}_{\mu m}) W^{l+1,e}_{kj} W^{l+1,e}_{mj} + \right. \\
    \nonumber
    &\qquad\qquad\quad \left. + \mu^2\mathcal E^2 \sum_{\mu,\nu}^{N^2} \sum_{i,j}^d \delta_{ij} W^{l+1,b}_{\mu\nu} W^{l+1,b}_{\mu\nu} + \mu^4 \sum_{\mu,\nu}^{N^2} \sum_{i,j}^d \delta_{\mu\nu} \delta_{ij} \right] \\
    \nonumber
    &= \frac{1}{N^2 d} \mathbb E_\theta \left[ \mathcal E^4 \sigma_w^2 \sum_{\mu,\nu}^{N^2} \sum_j^d \sum_k^{4d} \phi'(e^{l+1}_{\mu k}) \phi'(e^{l+1}_{\mu k}) W^{l+1,e}_{kj} W^{l+1,e}_{kj} W^{l+1,b}_{\mu\nu} W^{l+1,b}_{\mu\nu} +  \right. \\
    \nonumber
    &\qquad\qquad\quad \left. + \mu^2\mathcal E^2 \sigma_w^2 \sum_{\mu,\nu}^{N^2} \sum_j^d \sum_k^{4d} \delta_{\mu\nu} \phi'(e^{l+1}_{\mu k}) \phi'(e^{l+1}_{\mu k}) W^{l+1,e}_{kj} W^{l+1,e}_{kj} + \right. \\
    \nonumber
    &\qquad\qquad\quad + \left. \mu^2\mathcal E^2 \sigma_w^2 N^2 d + \mu^4 N^2 d \right] \\
    \nonumber
    &= \frac{1}{N^2 d} \mathbb E_\theta \left[ \frac{1}{4} \mathcal E^4 \sigma_w^6 \sum_\mu^{N^2} \sum_k^{4d} \phi'(e^{l+1}_{\mu k}) \phi'(e^{l+1}_{\mu k}) + \mu^2\mathcal E^2 \sigma_w^4 \sum_\mu^{N^2} \sum_k^{4d} \phi'(e^{l+1}_{\mu k}) \phi'(e^{l+1}_{\mu k}) + \right. \\
    \nonumber
    &\qquad\qquad\quad \left. + (\mathcal E^2 \sigma_w^2 + \mu^2) \mu^2 N^2 d \right] \\
    \nonumber
    &= (\mu^2 + \mathcal E^2 \sigma_w^2) \left( \mu^2 + \mathcal E^2 \sigma_w^4 \mathbb E_\theta \left[ \phi'(e^{l+1}_{\mu k}) \phi'(e^{l+1}_{\mu k}) \right] \right) \\
    \nonumber
    &= (\mu^2 + \mathcal E^2 \sigma_w^2) \left( \mu^2 + \mathcal E^2 \sigma_w^4 \, \mathcal H_e [\mathcal K^{l+1}_e] \right) \\
    &= (\mu^2 + \mathcal E^2 \sigma_w^2) \left( \mu^2 + \mathcal E^2 \sigma_w^4 \, \mathcal H [\mathcal K^l] \right) \,,
\end{align}
where we have defined 
\begin{align}
    \nonumber
    \mathcal H_e [\mathcal K^{l+1}_e] &\equiv \mathbb{E}_\theta \left[ \phi'(e^{l+1}_{\mu k}) \phi'(e^{l+1}_{\mu k}) \right] \\
    &= \frac{1}{4} + \frac{1}{2\pi} \left( \arcsin{\left( \frac{\mathcal K^{l+1}_e}{1 + \mathcal K^{l+1}_e} \right)} + \frac{\mathcal K^{l+1}_e (3  + 5 \mathcal K^{l+1}_e)}{(1 + \mathcal K^{l+1}_e) (1 + 2\mathcal K^{l+1}_e)^{3/2}} \right) \,.
\end{align}
We also write $\mathcal K^{l+1}_e$ in terms of $\mathcal K^l$ and define
\begin{align}
    \nonumber
    \mathcal H [\mathcal K^l] &= \mathcal H_e [\mathcal K^{l+1}_e] \\
    &= \mathcal H_e \left[ \sigma_w^2 \left( \mu^2 \mathcal K^l + \mathcal E^2 (\sigma_w^2 \mathcal K^l + \sigma_b^2) + \sigma_b^2 \right) \right] \,.
\end{align}

It is clear from \eqref{appeq:resmlp_apjn} that for $\mu=1$, $\mathcal J^{l,l+1} > 1$, rendering the network off criticality. However, $\mathcal J^{l,l+1}$ can be tuned arbitrarily close to criticality by taking $\mathcal E$ to be small at $t=0$. This explains the necessity for \texttt{LayerScale} with small initial value in the ResMLP architecture.

We note that the results in \eqref{appeq:resmlp_apjn} greatly simplify on using \texttt{ReLU} instead of \texttt{GELU} as $\phi$. We mention them here to provide intuition. $\mathcal H_e [\mathcal K^{l+1}_e] = \mathbb{E}_\theta \left[ \phi'(e^{l+1}_{\mu k}) \phi'(e^{l+1}_{\mu k}) \right] = \frac{1}{2}$ in this case. This gives us the simple result
\begin{equation}
    \mathcal J^{l,l+1} = (\mu^2 + \mathcal E^2 \sigma_w^2) \left( \mu^2 + \frac{1}{2}\mathcal E^2 \sigma_w^4 \right) 
\end{equation}
for \texttt{ReLU} activation function.

\end{document}